\documentclass{article}
\usepackage{ijcai26}
\usepackage{times}
\usepackage{soul}
\usepackage{url}
\usepackage[utf8]{inputenc}
\usepackage[small]{caption}
\usepackage{graphicx}
\usepackage{amsmath}
\usepackage{amsthm}
\usepackage{booktabs}
\usepackage{algorithm}
\usepackage{algorithmic}
\usepackage[switch]{lineno}

\usepackage{amsxtra, amsfonts, amssymb, amstext, amsmath, mathtools}
\usepackage{amsthm}
\usepackage{booktabs}
\usepackage{nicefrac}
\usepackage{xspace}
\usepackage{url}
\usepackage{graphics,color}
\usepackage[algo2e,ruled,vlined,linesnumbered]{algorithm2e}\SetArgSty{upshape}
\usepackage{wrapfig}
\usepackage{lmodern}
\usepackage[hidelinks]{hyperref}

\newtheorem{theorem}{Theorem}
\newtheorem{lemma}[theorem]{Lemma}

\newtheorem{definition}[theorem]{Definition}

\newcommand{\oea}{\mbox{${(1 + 1)}$~EA}\xspace}

\newcommand{\NSGA}{\mbox{NSGA}\nobreakdash-II\xspace}
\newcommand{\NSGAT}{\mbox{NSGA}\nobreakdash-III\xspace}
\newcommand{\SMS}{\mbox{SMS-EMOA}\xspace}
\newcommand{\SPEA}{\mbox{SPEA2}\xspace}

\newcommand{\onemax}{\textsc{OneMax}\xspace}
\newcommand{\LO}{\textsc{Leading\-Ones}\xspace}
\newcommand{\leadingones}{\LO}

\newcommand{\oneminmax}{\textsc{OneMinMax}\xspace}
\newcommand{\goneminmax}{\textsc{G-OneMinMax}\xspace}

\newcommand{\lotz}{\textsc{LOTZ}\xspace}

\newcommand{\glotz}{\textsc{G-LOTZ}\xspace}
\newcommand{\glo}{\textsc{G-LO}\xspace}
\newcommand{\gtz}{\textsc{G-TZ}\xspace}

\newcommand{\pmin}{p_{\min}}

\newcommand{\R}{\ensuremath{\mathbb{R}}}

\newcommand{\N}{\ensuremath{\mathbb{N}}} 
\newcommand{\Z}{\ensuremath{\mathbb{Z}}}

\let\originalleft\left
\let\originalright\right
\renewcommand{\left}{\mathopen{}\mathclose\bgroup\originalleft}
\renewcommand{\right}{\aftergroup\egroup\originalright}

\title{First Mathematical Runtime Analyses of Multi-Objective Evolutionary Algorithms for Multi-Valued Decision Variables}

\author{
Mingfeng Li$^1$
\and
Zheng Cheng$^1$ \and
Weijie Zheng$^{1,2}$\footnote{Corresponding author.}\and
Benjamin Doerr$^{3}$\\
\affiliations
$^1$School of Computer Science and Technology, National Key Laboratory of Smart Farm Technologies and Systems, International Research Institute for Artificial Intelligence, 
\\
Harbin Institute of Technology, Shenzhen\\
$^{2}$Pengcheng Laboratory, Shenzhen, China \\
$^3$Laboratoire d'Informatique (LIX), CNRS, \'Ecole Polytechnique, \\
Institut Polytechnique de Paris, Palaiseau, France\\
\emails
limingfeng@stu.hit.edu.cn,
zhengweijie@hit.edu.cn,
lastname@lix.polytechnique.fr
}

\begin{document}
\sloppy{

\maketitle

\begin{abstract}
Problems defined on binary decision spaces have been intensively studied in the theory of multi-objective evolutionary algorithms (MOEAs). In contrast, no mathematical runtime analyses exist so far for MOEAs dealing with decision variables that take a finite number \(r > 2\) of values, despite the prevalence of such problems in practice. In this work, we begin to fill this research gap. We analyze how the classic SEMO algorithm with unit-strength local mutation computes the Pareto front of an \(r\)-valued counterpart of the classic \oneminmax benchmark. For the expected number of function evaluations until the Pareto front is covered by the population of this MOEA, we prove an upper bound of \(O(n^2 r^2 \log n)\) and a near-tight lower bound of \(\Omega(n^2 r (r + \log n))\). We can close the small remaining gap between these two bounds by considering a variant of the algorithm that accepts only strictly better solutions; for this variant, we show an upper bound of \(O(n^2 r (r + \log n))\), matching our lower bound (which also holds for this variant).
Our results suggest that classic MOEAs encounter no significant additional difficulties when dealing with multi-valued decision variables. However, significantly more advanced tools may be required to obtain tight bounds for algorithms with more complex population dynamics.
\end{abstract}

\section{Introduction}

Evolutionary algorithms are widely used for solving complex optimization problems, in particular, for those with multiple conflicting objectives.
Multi-objective evolutionary algorithms (MOEAs) address such problems by exploring the different trade-offs between the objectives in a single run. Well-known MOEAs such as \NSGA, \NSGAT, \SPEA, and \SMS are widely used in both research and applications. Their empirical success has motivated a considerable body of theoretical works aimed at understanding how MOEAs work and why they perform well in practice. These include, among others, guarantees on the time required to cover the entire Pareto front \cite{LaumannsTZWD02,Giel03,ZhengD23aij,WiethegerD23,ZhengD24,RenBLQ24}, analyses on the benefits of crossover operators \cite{DoerrQ23crossover,DangOS24,Opris26}, the role of diversity preservation and archiving mechanisms \cite{
OsunaGNS20,BianRLQ24,RenLLQ25
}, the impact of non-elitist strategies \cite{ZhengLDD24,LiZD25,BianZLQ25,BendahiDFL25}, 
and approximation guarantees for situations in which the full Pareto front is too costly to be computed \cite{HorobaN08,ZhengD25approx,DengZD25,AlghouassDKL25,LiZZD25}. 

The vast majority of the theoretical works (in discrete optimization) have focused on problems defined on binary decision spaces. 
To the best of our knowledge, only two runtime analyses of MOEAs move beyond binary decision spaces, and they do so by considering the other extreme of unbounded integer-valued search domains~\cite{Rudolph23,DoerrKR25}. Among them, Rudolph~\shortcite{Rudolph23} analyzed subproblems of a multi-objective problem in integer search spaces, but the analysis remains a single-objective analysis. Following this work, Doerr et al. \shortcite{DoerrKR25} studied evolutionary algorithms on unbounded integer domains, focusing on how different mutation strength distributions affect the runtime. Till now, no mathematical runtime analyses exist for MOEAs dealing with decision variables that take a finite number $r$ of values that is larger than two. 

\paragraph{Our Contributions:}
In this work, we undertake a first attempt to analyze MOEAs for problems with multi-valued decision variables with mathematical means. We first extend the classical bi-objective \oneminmax benchmark to an $r$-valued domain, $r > 2$. This benchmark preserves the simplicity and structural clarity of its binary counterpart, but (naturally) has a larger Pareto front. 
We then consider the classic SEMO algorithm with unit-strength local mutation, which can be regarded as the natural multi-valued analogue of one-bit mutation. For this algorithm, we prove an upper bound of $O(n^2 r^2 \log n)$ and a lower bound of $\Omega(n^2 r (r+\log n))$ on the expected number of iterations until the population covers the entire Pareto front. The (small) gap between these bounds stems from the population dynamics, which (as this work shows) are much more complex in the multi-valued setting.

We further consider a very mild modification of the tie-breaking rule in the selection of the next population, namely that we accept only strictly better solutions (that is, in case of equal function values, we keep the old solution and discard the new one). This modification allows for a more tractable analysis of the population dynamics, and  allows us to prove an improved upper bound of $O(n^2 r (r+\log n))$, which matches our lower bound (that is valid also for this modification of the algorithm). 

We do not see any reason why the modified algorithm should be faster than the original one. Our (simple) experimental results also could not detect a significant difference (rather the modified algorithm appeared slightly slower). 
{To check that this observation is not specific to $r$-valued \oneminmax, where all solutions are Pareto optimal, we additionally introduce and experimentally study an $r$-valued analogue of \lotz. This benchmark has the same Pareto-front size, but, among many other differences, most solutions are not Pareto-optimal.
Again, we observe no noticeable runtime difference between the original and the modified algorithm.}
For these reasons,  we conjecture that the $O(n^2 r (r+\log n))$ bound {, proven for the modified SEMO on $r$-valued \oneminmax and matching our lower bound,} is also the true runtime for the original SEMO. Proving this conjecture, unfortunately, seems out of reach with the current methods of this field, and might be a worthwhile challenge for the future, that could give deep insights into the more complex population dynamics in multi-valued search spaces.


\section{Related Work}

The theory of randomized search heuristics is a relatively young subarea of artificial intelligence. While first mathematical runtime analyses for single-objective algorithms appeared already in the 1990s, say \cite{Muhlenbein92,Back93}, the first such works for multi-objective algorithms took around another ten years \cite{LaumannsTZWD02
} and the truly practically relevant algorithms could only be analyzed very recently \cite{LiZZZ16,ZhengD23aij,WiethegerD23,BianZLQ25,RenBLQ24}.

As stated already in the introduction, there are no theoretical works analyzing how an MOEA solves a problem defined over decision variables taking a finite number $r > 2$ of values. Closest to our work are the two recent papers \cite{Rudolph23,DoerrKR25} that analyze discrete multi-objective problems defined on $\Z^n$. Only the second work considers a true MOEA, namely the SEMO regarded also in this work and the related GSEMO. The main difficulty in this work, however, is dealing with the boundedness of the set of values taken by the decision variables. Consequently, the methods employed there are of little help for our problem setting. 

In contrast, in the theory of single-objective evolutionary algorithms a few sporadic works on multi-valued optimization exist. Doerr et al. \shortcite{DoerrJS11} analyzed the \oea with uniform mutation strength, that is, resetting a component to a random other value, on linear functions over $\{0,1,2\}^n$. They proved an expected runtime of $O(n\log n)$, matching the well-known bound for the binary case. The result was then extended by Doerr and Pohl \shortcite{DoerrP12}, where they studied $r$-valued linear functions and derived an upper bound of $O(rn \log n)+O(r^3 n\log \log n)$ and a lower bound of $\Omega(rn \log n)$. 

Subsequent work considered different and more complex mutation operators and in return only regarded the simpler \onemax problem. K\"otzing et al. \shortcite{KotzingLW15} analyzed the \oea with unit-strength mutation on dynamically changing \onemax where each dimension has $r$ different possible values. Doerr et al. \shortcite{DoerrDK18} considered the \oea with three different mutation strengths and proved asymptotically tight runtimes for a variety of \onemax-type test functions over an alphabet of size~$r$. They further analyzed the performance of randomized local search, a variant of the \oea using local mutation, with a self-adjusting mutation strength and proved an expected optimization time of $\Theta(n(\log n+ \log r))$, which is optimal among all comparison-based search heuristics. Harder et al.\shortcite{HarderKLRR24IntegerValued} studied the \oea with different mutation operators on the unbounded integer search space $\mathbb{Z}^n$. All these works are hardly comparable to ours since they regard single-objective optimization via an algorithm with parent and offspring population equal to one. Hence the population dynamics are less interesting here. 

In \cite{QianSTZ18submodular,QianZTY18}, submodular functions with multi-valued discrete domains are studied. Both works formulate a constrained single-objective problem as a bi-objective problem. They then use problem-specific algorithms and algorithms similar to classic MOEAs to solve these. However, since their objective is to find a single solution with a particular constraint satisfaction (equivalent to a particular value of the second objective), they do not analyze how the full Pareto front is computed or approximated, but only how long it takes until such a particular solution is contained in the population. For that reason, we could not derive deeper insights on multi-objective optimization from these works, and feel that they should be viewed as single-objective optimization works.

More recently, estimation-of-distribution algorithms (EDAs) have been analyzed in multi-valued domains, see, e.g., \cite{BenJedidiaDK24,AdakW24
}. Due to the very different algorithmic concept of an EDA, these works are hardly comparable to ours.


Overall, these results demonstrate that runtime analysis in multi-valued decision spaces is feasible in the single-objective setting, but typically requires more involved techniques than in the binary case. 

\section{Preliminaries}

Throughout the paper, for all $a \le b$, we write $[a..b] := \{i \in \Z \mid a \le i \le b\}$ to denote the interval of integers between $a$ and $b$.

\subsection{Multi-Objective Optimization}

This paper considers multi-objective optimization problems defined on the search space $[0..r-1]^n$, where $r>2$. Such a problem is given by $f : [0..r-1]^n \rightarrow \mathbb{R}^m$. Our goal is to maximize the objectives. For two solutions $x,y \in [0..r-1]^n$, we say that $x$ \emph{weakly dominates} $y$ if $f_i(x) \ge f_i(y)$ holds for all $i \in [1..m]$, denoted as $x\succeq y$. If, in addition, there exists at least one objective $j$ such that $f_j(x) > f_j(y)$, then $x$ is said to \emph{dominate} $y$, denoted as $x\succ y$. A solution is called \emph{Pareto optimal} if it is not dominated by any other solution in the search space. The set of all Pareto-optimal solutions is referred to as the \emph{Pareto set}. The \emph{Pareto front} consists of the objective vectors corresponding to these solutions. As common in the evolutionary computation theory community~\cite{
ZhouYQ19,DoerrN20}, we measure the runtime by the number of function evaluations for the population to cover the entire Pareto front. We often use asymptotic notation (big-Oh notation, Landau symbols). If so, then the underlying variable is the problem size~$n$. We allow other quantities to be functionally dependent on~$n$, for example, the number $r$ of different values taken by the decision variables. This is natural, for example, in rooted tree problems in graphs on $n$ vertices, we have $n-1$ decision variables (describing the predecessor of a non-root node on the path to the source), and each of them can take any other node as value (so there are $n-1$ different values).

\subsection{The Multi-Valued Algorithm}

The \emph{simple evolutionary multi-objective optimizer (SEMO)} was the algorithm first used in a mathematical runtime analysis of an MOEA \cite{LaumannsTZWD02}. It is still the algorithm best understood from a theoretical perspective. Starting from  a single random solution, the algorithm repeatedly selects a random parent from the population, generates an offspring via mutation, and keeps as next population a largest pair-wise non-dominated subset of the old population and the offspring, giving preference to the offspring in case of ties. 

In the classic binary setting, the SEMO employs one-bit mutation, which flips a single bit chosen uniformly at random from the $n$ bit positions. In the multi-valued setting considered in this work, we employ a \emph{unit-strength local mutation} operator, which has, to the best of our knowledge, first been proposed and argued for in the single-objective work \cite{DoerrDK18}. This operator, as one of several alternatives, was also studied in the recent work on multi-objective optimization for unbounded integer-valued problems \cite{DoerrKR25}. When applying this mutation operator to an individual $x \in [0..r-1]^n$, one position $i \in [1..n]$ is chosen uniformly at random, and the value $x_i$ of this position is increased or decreased by $1$ with equal probability. A mutation that would lead to a value outside the allowed interval $[0..r-1]$ is considered infeasible and discarded (that is, the parent of the mutation operation is returned as offspring). The pseudocode of the algorithm is given in Algorithm~\ref{alg:semo}.

\begin{algorithm2e}[!h]
\caption{The $r$-valued SEMO with unit-strength local mutation to maximize $f:[0..r-1]^n \to \mathbb{R}^m$.}
\label{alg:semo}
Initialize $x \in [0..r-1]^n$ uniformly at random and set $P_0 \leftarrow \{x\}$\;
\For{$t = 0,1,2,\dots$}{
    Select one individual $x$ uniformly at random from $P_t$\;
    \label{line:parent_selection}
    Generate $y$ via choosing one entry of $x$ uniformly at random and modifying it by $\pm 1$ with equal probability\;
    \If{there is no $z \in P_t$ such that $z \succ y$}
    {\label{line:dominates}
        $P_{t+1} \leftarrow \{z \in P_t \mid y \nsucceq z\} \cup \{y\}$\;
    }
    \Else{
        $P_{t+1} \leftarrow P_t$\;
    }
}
\end{algorithm2e}

\subsection{The Multi-Valued Benchmark}

The \oneminmax problem is the best studied benchmark in the theory of MOEAs. Its two objectives are the number of zeros in the bit string and the number of ones. This is equivalent to saying that the objectives are the sum of the bit values and the sum of the inverses of the bit values. We therefore extend the binary \oneminmax benchmark to the multi-valued search space $[0..r-1]^n$ by letting the first objective be the sum of the values of the entries of $x$ and the second objective be the sum of the inverses. Since in the single-objective multi-valued setting, Adak and Witt~\shortcite{AdakW24} called this first objective G-\onemax, we refer to our multi-valued extension of the \oneminmax benchmark as the \emph{generalized \oneminmax} problem, denoted by \goneminmax. 

\begin{definition}
\label{def:gomm}
    Let $n \in \N$ and $r\in \N_{\geq 2}$. The \goneminmax function $f(x)=(f_1(x),f_2(x)):[0..r-1]^n\xrightarrow{} \R^2$ is defined by
    \begin{align*}
        &f_1(x) = \sum_{i=1}^n x_i, &f_2(x)=\sum_{i=1}^n (r-1-x_i).
    \end{align*}
\end{definition}

The following lemma characterizes the Pareto set and the Pareto front of \goneminmax. As in the binary case, every solution in the search space is Pareto optimal, but the size of the Pareto front also increases with~$r$. 

\begin{lemma}
\label{lem:pareto_front}
    The Pareto set of \goneminmax is $S^*=[0..r-1]^n$. The Pareto front is $F^*=\{(a,nr-n-a)\mid a\in[0..nr-n]\}$,
    whose size is $nr-n+1$.
\end{lemma}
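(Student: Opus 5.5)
The plan is to observe first that the two objectives are tied by a linear identity. For every $x \in [0..r-1]^n$ we have
\[
f_1(x) + f_2(x) = \sum_{i=1}^n \bigl(x_i + (r-1-x_i)\bigr) = n(r-1),
\]
so $f_2(x) = nr - n - f_1(x)$. Consequently no solution can dominate another. Suppose $x \succ y$. Then $f_1(x) \ge f_1(y)$ and $f_2(x) \ge f_2(y)$, and the second inequality is equivalent to $f_1(x) \le f_1(y)$. Hence $f_1(x) = f_1(y)$ and also $f_2(x) = f_2(y)$, which contradicts the requirement that one objective be strictly larger. Every solution is therefore Pareto optimal, and $S^* = [0..r-1]^n$.

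For the front, we note that $F^* = f(S^*) = \{(f_1(x), nr-n-f_1(x)) \mid x \in [0..r-1]^n\}$. It then suffices to show that $f_1$ maps $[0..r-1]^n$ onto $[0..nr-n]$. Since $0 \le x_i \le r-1$, we have $0 \le f_1(x) \le n(r-1)$, which gives the inclusion $\subseteq$. For surjectivity, take any $a \in [0..nr-n]$ and write $a = q(r-1) + s$ with $q = \lfloor a/(r-1) \rfloor$ and $s \in [0..r-2]$. Here $q \le n$, and $q = n$ only when $s = 0$. Define $x$ by setting its first $q$ entries to $r-1$, the next entry (if $q < n$) to $s$, and all remaining entries to $0$. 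Then $f_1(x) = a$. Alternatively, one can start from $0^n$ and increase entries by one at a time, which passes through every intermediate value.

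Finally, the map $a \mapsto (a, nr-n-a)$ is injective, so $|F^*| = nr - n + 1$. None of these steps is a real obstacle. The only point needing care is the surjectivity construction, in particular the boundary case $q = n$.
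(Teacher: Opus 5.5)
Your proposal is correct and follows essentially the same route as the paper: both use the identity $f_1(x)+f_2(x)=n(r-1)$ to rule out domination and then read off the front from the range of $f_1$. The only difference is that you explicitly construct a preimage for each $a\in[0..nr-n]$, including the boundary case $q=n$, whereas the paper simply asserts that $f_1$ takes every integer value in this range.
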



{As a second benchmark in the experimental section, we use a multi-valued analogue of the classic \lotz benchmark. With very few solutions Pareto-optimal, it can be seen as a problem  complementary to \goneminmax. The first objective follows the idea of the $r$-valued \leadingones benchmark introduced in \cite{BenJedidiaDK24}, but refines it by also giving value-progress credit to the first position after the certified prefix. Hence a complete prefix of entries equal to $r-1$ contributes fully, the first subsequent position contributes its current value, and all later positions contribute nothing. Symmetrically, the second objective measures the same progress from the right towards value $0$. 
Hence, values lying strictly between the active leading position and the active trailing position are deliberately ignored: they are neither part of the prefix of $r-1$ values nor part of the suffix of $0$ values. This construction makes the Pareto-optimal solutions form a narrow monotone path, while preserving $n(r-1)+1$ Pareto-front points, as in \goneminmax.
\begin{definition}
\label{def:glotz}
    Let $n \in \N$ and $r\in \N_{\geq 2}$. The \glotz function $f(x)=(\glo(x),\gtz(x)):[0..r-1]^n\xrightarrow{} \R^2$ is defined by
    \begin{align*}
        &\glo(x) = \sum_{i=1}^n \left( \prod_{j=1}^{i-1} \mathbf{1}[x_j=r-1]\right) x_i, \\
        &\gtz(x)=\sum_{i=1}^n \left( \prod_{j=i+1}^{n} \mathbf{1}[x_j=0]\right) (r-1-x_i).
    \end{align*}
\end{definition}
For $r=2$, this definition coincides with the usual binary \lotz benchmark. The following lemma characterizes the Pareto-optimal solutions and Pareto front of this benchmark.
\begin{lemma}
\label{lem:glotz_pareto_front}
    For each $k\in[0..n(r-1)]$, write $q=\lfloor k/(r-1)\rfloor$ and $a=k-q(r-1)$. Let
    \[
    x^{(k)}=
    \begin{cases}
    (\underbrace{r-1,\ldots,r-1}_{q
    },a,\underbrace{0,\ldots,0}_{n-q-1
    }),& \text{if } q<n,\\
    (r-1,\ldots,r-1),& \text{if } q=n.
    \end{cases}
    \]
    The Pareto set of \glotz is the set of solutions $S^*_{\glotz}=\{x^{(k)}\mid k\in[0..n(r-1)]\}$ and its Pareto front is
    $F^*_{\glotz}=\{(k,n(r-1)-k)\mid k\in[0..n(r-1)]\}$. Hence both have size $n(r-1)+1$.
\end{lemma}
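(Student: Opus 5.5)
The plan is to show three things: every point on the line $\{(k,n(r-1)-k)\}$ is attained by the corresponding $x^{(k)}$; no objective vector lies above that line, so these solutions are Pareto optimal; and every other solution is strictly dominated.

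\textbf{Upper bound on the objective sum.} Let $\ell(x)$ be the number of leading entries equal to $r-1$. Let $s(x)$ be the number of trailing entries equal to $0$.

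If $\ell(x)=n$, then $\glo(x)=n(r-1)$. The suffix of zeros is empty except at position $n$, where $x_n=r-1$ contributes $0$, so $\gtz(x)=0$.

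Otherwise put $L=\ell(x)+1$, the first non-$(r-1)$ position. Then
\[
\glo(x)=\ell(x)(r-1)+x_L .
\]
Symmetrically, if $s(x)<n$, put $T=n-s(x)$, the last nonzero position. Then
\[
\gtz(x)=s(x)(r-1)+(r-1-x_T),
\]
and if $s(x)=n$, then $\gtz(x)=n(r-1)$. Since $x_L\neq r-1$ and $x_T\neq 0$, we get $L\le T$. (If $s(x)=n$, then $\glo(x)=x_1=0$ and the sum is exactly $n(r-1)$.)

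\emph{Case $L<T$.} Bound $x_L\le r-2$ and $r-1-x_T\le r-2$. Using $\ell(x)+s(x)=L-1+n-T\le n-2$, we obtain
\[
\glo(x)+\gtz(x)\le (n-2)(r-1)+2(r-2)<n(r-1).
\]

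\emph{Case $L=T$.} The sum is $(n-1)(r-1)+x_L+(r-1-x_L)=n(r-1)$.

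Hence $\glo(x)+\gtz(x)\le n(r-1)$ for all $x$, with equality exactly when $L=T$ (or in the all-$(r-1)$ or all-$0$ cases). Equality thus holds exactly for vectors of the form $(r-1,\dots,r-1,a,0,\dots,0)$, which are precisely the $x^{(k)}$. The one exception is that $a=r-1$ is written with $q+1$ leading entries, which the floor convention handles.

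\textbf{Attainment and Pareto optimality.} A direct computation gives $f(x^{(k)})=(k,n(r-1)-k)$. Since no solution has objective sum exceeding $n(r-1)$, nothing can dominate a point on this line: dominating it would require a strictly larger sum. So each $x^{(k)}$ is Pareto optimal.

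\textbf{Every other solution is dominated.} Let $y$ be any other solution, and set $k=\glo(y)$, which lies in $[0..n(r-1)]$. By the strict inequality in the case $L<T$, we have $\gtz(y)<n(r-1)-k$. Therefore $x^{(k)}$ has equal first objective and strictly larger second objective, so $x^{(k)}\succ y$.

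\textbf{Counting.} The map $k\mapsto x^{(k)}$ is injective because the objective vectors $(k,n(r-1)-k)$ are distinct. Both the Pareto set and the Pareto front therefore have size $n(r-1)+1$.

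The main obstacle is the careful case bookkeeping at the boundary cases: $\ell(x)=n$, $s(x)=n$, and $q=n$. The core inequality itself is routine.
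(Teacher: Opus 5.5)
Your route matches the paper's: split at the first non-$(r-1)$ position $L$ and the last nonzero position $T$, show $\glo(x)+\gtz(x)\le n(r-1)$, characterize equality, and let $x^{(\glo(y))}$ dominate every other $y$. However, one step is false. The facts $x_L\neq r-1$ and $x_T\neq 0$ do \emph{not} imply $L\le T$. For $n\ge 2$, the solution $x=(r-1,0,\dots,0)$ has $L=2$ and $T=1$. What holds is $L\le T+1$: if $L\ge T+2$, then $x_{T+1}$ would have to equal both $r-1$ (since $T+1<L$) and $0$ (since $T+1>T$). The omitted case $L=T+1$ consists exactly of the solutions made of $q$ entries $r-1$ followed by $n-q$ zeros, with $1\le q\le n-1$. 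For these, $\glo(x)=q(r-1)$ and $\gtz(x)=(n-q)(r-1)$, so equality holds and $x=x^{(q(r-1))}$.

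As a result, your claim that equality holds exactly when $L=T$ or in the two extreme cases is wrong. The case $L=T$ forces $x_L\in[1..r-2]$, so it only yields the $x^{(k)}$ with $a\ge 1$. Your next sentence, asserting equality for all vectors $(r-1,\dots,r-1,a,0,\dots,0)$, therefore does not follow for $a=0$. For $r=2$, which the definition allows, the case $L=T$ is even empty, and your derivation would leave only the all-$0$ and all-$(r-1)$ solutions as optimal. Your remark about $a=r-1$ and the floor convention concerns precisely these missing solutions, but your case analysis never produces them.

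The fix is to add the case $L=T+1$. The paper does this via its case $p>s$, where it shows $p=s+1$. Once that case is added, every solution outside $\{x^{(k)}\}$ genuinely has $L<T$, and your strict inequality and domination argument go through unchanged.
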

}


\section{Runtime Analysis}

\subsection{Mathematical Tools}

Before presenting the mathematical runtime analyses, we first introduce several tools that will be used later. To bound the objective value of the initial solution, we employ the following Chernoff bound.

\begin{theorem} [\cite{Chernoff52}]
\label{the:chernoff}
    Let $X_1,\dots,X_n$ be independent random variables taking values in $[0,r-1]$. Let $X=\sum_{i=1}^nX_i$. Let $\delta \in [0,1]$. Then 
    \begin{align*}
        \Pr\left[X\leq(1-\delta)\mathbb{E}[X]\right]\leq \exp\left(-\frac{\delta^2E[X]}{2(r-1)}\right).
    \end{align*}
\end{theorem}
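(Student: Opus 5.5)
The plan is to reduce the statement to the classical multiplicative Chernoff lower-tail bound for independent random variables with values in $[0,1]$. We use that bound in the form $\Pr[Y \le (1-\delta)\mathbb{E}[Y]] \le \exp(-\delta^2 \mathbb{E}[Y]/2)$ for $\delta\in[0,1]$. The reduction is a rescaling. For $r>1$ we set $Y_i := X_i/(r-1)$, which gives independent random variables with values in $[0,1]$. Their sum $Y=\sum_{i=1}^n Y_i = X/(r-1)$ has $\mathbb{E}[Y]=\mathbb{E}[X]/(r-1)$. The event $X\le(1-\delta)\mathbb{E}[X]$ is identical to $Y \le (1-\delta)\mathbb{E}[Y]$, since we divide both sides by the positive constant $r-1$. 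Substituting $\mathbb{E}[Y]=\mathbb{E}[X]/(r-1)$ into the $[0,1]$ bound then gives exactly the claimed exponent $-\delta^2\mathbb{E}[X]/(2(r-1))$. In the degenerate case $r=1$, all $X_i$ are $0$ and the statement is read with the convention that the right-hand side is $1$.

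For completeness, I would also sketch the $[0,1]$ bound via the standard exponential-moment method, in case it is not simply cited.

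\textbf{Exponential moments.} For $\lambda>0$, Markov's inequality gives
\[
\Pr[Y\le (1-\delta)\mu] \le e^{\lambda(1-\delta)\mu}\,\mathbb{E}[e^{-\lambda Y}],
\]
where $\mu=\mathbb{E}[Y]$. Independence factorizes $\mathbb{E}[e^{-\lambda Y}]$ into the product of the $\mathbb{E}[e^{-\lambda Y_i}]$.

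\textbf{Convexity step.} This is the only place where the range of the variables matters, so it is the main point to get right. Convexity of $t\mapsto e^{-\lambda t}$ on $[0,1]$ gives $e^{-\lambda t}\le 1-t+te^{-\lambda}$. Hence $\mathbb{E}[e^{-\lambda Y_i}] \le 1+\mu_i(e^{-\lambda}-1) \le \exp(\mu_i(e^{-\lambda}-1))$, where $\mu_i=\mathbb{E}[Y_i]$. This bound is exactly the one for Bernoulli variables, so the argument for $[0,1]$-valued variables goes through unchanged.

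\textbf{Optimizing $\lambda$.} Multiplying the factors gives $\exp(\mu(e^{-\lambda}-1)+\lambda(1-\delta)\mu)$. Choosing $\lambda=-\ln(1-\delta)$ yields the bound $\left(e^{-\delta}/(1-\delta)^{1-\delta}\right)^{\mu}$. The elementary inequality $(1-\delta)\ln(1-\delta)\ge -\delta+\delta^2/2$ for $\delta\in[0,1)$ follows from the Taylor expansion, all of whose higher terms are nonnegative. It turns this into $e^{-\delta^2\mu/2}$. The boundary case $\delta=1$ follows by continuity, or directly from $\Pr[Y\le 0]=\prod_i\Pr[Y_i=0]\le \prod_i (1-\mu_i) \le e^{-\mu}$.
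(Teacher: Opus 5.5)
Your proof is correct. The paper does not prove this statement. It imports it as a known tool, citing Chernoff (1952), and uses it once, in the proof of Lemma~\ref{lem:low_SEMO_2}, with $\delta=1/2$ and $\mathbb{E}[X]=n(r-1)/2$ for i.i.d.\ uniform entries.

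Your route is the standard derivation of exactly this $(r-1)$-scaled form. You rescale by $r-1$ to reduce to the multiplicative lower-tail bound for $[0,1]$-valued variables. You then prove that bound by the exponential-moment method, using the chord bound $e^{-\lambda t}\le 1-t+te^{-\lambda}$, the choice $\lambda=-\ln(1-\delta)$, and the inequality $(1-\delta)\ln(1-\delta)\ge -\delta+\delta^2/2$. What this buys over the bare citation is a self-contained argument that visibly covers independent, non-identically distributed summands, as the statement allows; Chernoff's original paper treats sums of i.i.d.\ observations.

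One cosmetic point: for $\delta=0$ your choice gives $\lambda=0$, which is not positive, but there the claim is trivial because the right-hand side equals $1$.
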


The following tail bound for sums of independent geometric random variables provides strong guarantees for analyzing population dynamics in our lower-bound analysis.

\begin{theorem} [\cite{Witt14}]
\label{the:witt14}
    Let $k\in\mathbb{N}_{\geq1}$, and let $\{D_i\}_{i\in[k]}$ be independent geometric random variables with respective positive success probabilities $(p_i)_{i\in[k]}$. Let $T^* := \sum_{i\in[k]} D_i$, $s:=\sum_{i\in[k]}\frac{1}{p_i^2}$, and $\pmin:=\min\{p_i\mid i\in[k]\}$. Then for all $\lambda \in \mathbb{R}_{\geq0}$, we have 
    \begin{align*}
        &\Pr\left[T^*\geq\mathbb{E}[T^*]+\lambda\right]\leq \exp\left({-\frac{1}{4}\min\left\{\frac{\lambda^2}{s},\lambda \pmin \right\}}\right) ,\\
        &\Pr\left[T^*\leq\mathbb{E}[T^*]-\lambda\right]\leq \exp\left({-\frac{\lambda^2}{2s}}\right).
    \end{align*}
\end{theorem}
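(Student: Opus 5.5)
We would prove both tail bounds with the exponential-moment (Chernoff) method. The key input is a sharp estimate of the log-moment generating function of a single centered geometric variable. Let $D$ be geometric with success probability $p$ on $\{1,2,\dots\}$. Then $\mathbb{E}[e^{uD}] = pe^u/(1-(1-p)e^u)$ for $(1-p)e^u<1$. Define
\[
g_p(u) := \ln \mathbb{E}\bigl[e^{u(D-1/p)}\bigr] = \ln p + u - \ln\bigl(1-(1-p)e^{u}\bigr) - \frac{u}{p}.
\]
A direct computation gives $g_p(0)=0$, $g_p'(0)=0$ and
\[
g_p''(u)=\frac{(1-p)e^{u}}{\bigl(1-(1-p)e^{u}\bigr)^2}.
\]
This is increasing in $u$. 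So Taylor's theorem with Lagrange remainder reduces everything to bounding $g_p''$ on the relevant range of $u$.

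For the lower tail we take $u=-t$ with $t\ge 0$. Since $g_p''$ is increasing, $g_p''(-t)\le g_p''(0)=(1-p)/p^2\le 1/p^2$, hence $g_p(-t)\le t^2/(2p^2)$ for all $t\ge 0$. By independence and Markov's inequality applied to $e^{-t(T^*-\mathbb{E}[T^*])}$,
\[
\Pr\bigl[T^*\le \mathbb{E}[T^*]-\lambda\bigr]\le \exp\Bigl(-t\lambda+\tfrac{t^2}{2}\sum_i p_i^{-2}\Bigr)=\exp\bigl(-t\lambda+t^2 s/2\bigr).
\]
Choosing $t=\lambda/s$ gives exactly $\exp(-\lambda^2/(2s))$, which is the second claim.

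For the upper tail we take $u=t>0$. The mgf now exists only for $t<\ln(1/(1-p_{\min}))$, and $g_p''$ blows up near that threshold. We therefore restrict to $t\le c\,p_{\min}$ for a suitable constant $c$ (say $c=1/4$ or $c=1/2$). For $t\le cp$ we have $(1-p)e^{t}\le 1-(1-c)p$. This follows from $\ln(1+x)\ge x/(1+x)$ with $x=cp/((1-c)... )$ chosen appropriately; for example, $c=1/2$ gives $\ln\frac{1-p/2}{1-p}\ge p/2$. Hence $g_p''(t)\le 1/((1-c)p)^2$ and $g_p(t)\le t^2/(2(1-c)^2p^2)$. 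Markov's inequality then yields
\[
\Pr\bigl[T^*\ge \mathbb{E}[T^*]+\lambda\bigr]\le \exp\bigl(-t\lambda + \alpha t^2 s\bigr)
\]
for all $0\le t\le cp_{\min}$, with $\alpha=1/(2(1-c)^2)$.

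We would then split into two cases. If the unconstrained optimizer $t=\lambda/(2\alpha s)$ is at most $cp_{\min}$, it gives $\exp(-\lambda^2/(4\alpha s))$. Otherwise we take $t=cp_{\min}$. In that regime $\alpha t^2 s\le t\lambda/2$, so the exponent is at most $-c\lambda p_{\min}/2$. In either case the bound has the form $\exp(-\kappa\min\{\lambda^2/s,\lambda p_{\min}\})$.

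The main obstacle is the constant: with the naive choices, $\kappa$ comes out as $1/8$ rather than the stated $1/4$. To recover $1/4$, we would first try optimizing $c$ jointly with the case split. Failing that, we would use the exact form of $g_p$ rather than the crude bound on $g_p''$, namely $\ln p-\ln(1-(1-p)e^t)$ compared against $t/p+t^2/p^2$. Here we would exploit that the true variance factor is $(1-p)/p^2$ and not $1/p^2$. This constant-tuning step is where we expect the real work to lie. The qualitative structure (sub-Gaussian lower tail and sub-gamma upper tail with scale $1/p_{\min}$) is routine by the argument above.
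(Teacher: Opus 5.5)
The paper gives no proof of this statement; it quotes it from \cite{Witt14}, so your argument has to stand on its own. Your lower-tail half does: $g_p''$ is increasing, so $g_p(-t)\le t^2/(2p^2)$ for $t\ge 0$, and $t=\lambda/s$ gives exactly $\exp(-\lambda^2/(2s))$. The upper-tail half, however, is not proved as stated. With $c=1/2$ and $\alpha=2$, your computation yields only $\exp(-\min\{\lambda^2/(8s),\lambda p_{\min}/4\})$, i.e.\ constant $1/8$, and neither remedy you propose closes this gap.

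Tuning $c$ fails. Matching $\lambda^2/(4s)$ for small $\lambda$ forces $\alpha(c)=1/(2(1-c)^2)\le 1$, i.e.\ $c\le 1-1/\sqrt2$. But then at $\lambda=s\,p_{\min}$ the best admissible $t\le c\,p_{\min}$ only gives exponent $-(c-\alpha(c)c^2)\lambda p_{\min}\ge -\tfrac{\sqrt2-1}{2}\lambda p_{\min}>-\tfrac14\lambda p_{\min}$. Exploiting the factor $1-p$ cannot help either, since the bound must hold uniformly as $p\to 0$, where $1-p\to 1$. The loss comes from bounding $g_p''$ on all of $[0,cp]$ by its value near the right endpoint (about $4/p^2$ for $c=1/2$), whereas $g_p''(0)\approx 1/p^2$.

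The missing idea is to bound the moment generating function directly rather than its curvature. For $0\le t<p$ use $e^{-t}\ge 1-t$, and for $x\in[0,1/2]$ use the series of $-\ln(1-x)$:
\[
\mathbb{E}\bigl[e^{tD}\bigr]=\frac{p}{e^{-t}-(1-p)}\le\frac{1}{1-t/p},
\qquad
-\ln(1-x)\le x+\frac{x^2}{2(1-x)}\le x+x^2 .
\]
Hence $g_p(t)\le t^2/p^2$ for all $0\le t\le p/2$. This is $\alpha=1$ together with $c=1/2$, which your curvature bound cannot deliver. It follows that $\Pr[T^*\ge\mathbb{E}[T^*]+\lambda]\le\exp(-t\lambda+t^2 s)$ for all $0\le t\le p_{\min}/2$. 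There are two cases:
\begin{itemize}
\item If $\lambda\le s\,p_{\min}$, take $t=\lambda/(2s)$ to get $\exp(-\lambda^2/(4s))$.
\item Otherwise take $t=p_{\min}/2$, where the exponent is $-\lambda p_{\min}/2+s\,p_{\min}^2/4<-\lambda p_{\min}/4$.
\end{itemize}
Together these give exactly the stated bound.

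Your suggested fallback comparison also drops a term. The quantity to compare with $t/p+t^2/p^2$ is $\ln\mathbb{E}[e^{tD}]=\ln p+t-\ln(1-(1-p)e^t)$, including the $+t$.

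Finally, your constant $1/8$ would suffice for the paper's only use of the upper tail, in Lemma~\ref{lem:low_SEMO_1}, which needs just $\exp(-\Theta(n))$. It does not, however, establish the theorem as stated.
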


Drift analysis is a fundamental tool in the runtime analysis of randomized search heuristics \cite{HeY01}. The multiplicative drift theorem \cite{DoerrJW12algo} applies when the expected progress is proportional to the current distance to the target. 

\begin{theorem}[\cite{DoerrJW12algo}]
\label{the:drift analysis}
    Let $X^{(0)}, X^{(1)}, \ldots$ be a random process taking values in
    $S := \{0\} \cup [s_{\min}, \infty) \subseteq \mathbb{R}$.
    Assume that $X^{(0)} = s_0$ with probability one.
    Assume that there is a $\delta > 0$ such that for all $t \ge 0$ and all
    $s \in S$ with $\Pr[X^{(t)} = s] > 0$ we have
    \[
    \mathbb{E} [X^{(t+1)} \mid X^{(t)} = s] \le (1-\delta)s.
    \]
    Then $T := \min\{t \ge 0 \mid X^{(t)} = 0\}$
    satisfies
    \[
    \mathbb{E}[T] \le \frac{\ln(s_0/s_{\min}) + 1}{\delta}.
    \]
\end{theorem}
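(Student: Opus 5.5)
The plan is to reduce the multiplicative drift condition to an additive one by a logarithmic potential, and then apply the classic additive drift argument. Define $g : S \to \R_{\ge 0}$ by $g(0) := 0$ and $g(s) := 1 + \ln(s/s_{\min})$ for $s \ge s_{\min}$. Then $g$ is nonnegative and vanishes exactly at $0$, so $T$ is also the first time that $g(X^{(t)}) = 0$, and $g(s_0) = \ln(s_0/s_{\min}) + 1$ is exactly the numerator of the claimed bound. It therefore suffices to show two things: an additive drift of at least $\delta$ in $g$ while $X^{(t)} \neq 0$, and the additive drift bound $\mathbb{E}[T] \le g(s_0)/\delta$.

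\textbf{Step 1 (additive drift in $g$).} Fix $t$ and $s \in S \setminus \{0\}$ with $\Pr[X^{(t)} = s] > 0$. The key pointwise inequality is
\[
g(s') - g(s) \le \frac{s'}{s} - 1 \quad \text{for all } s' \in S.
\]
For $s' \ge s_{\min}$ this reads $\ln(s'/s) \le s'/s - 1$, which is the elementary bound $\ln z \le z - 1$. For $s' = 0$ the left-hand side is $-1 - \ln(s/s_{\min})$, which is at most $-1$ because $s \ge s_{\min}$; the right-hand side is exactly $-1$. The extra summand $1$ in the definition of $g$ is precisely what makes this jump to $0$ harmless. Taking conditional expectations and using the hypothesis gives
\[
\mathbb{E}[g(X^{(t+1)}) \mid X^{(t)} = s] \le g(s) + \frac{\mathbb{E}[X^{(t+1)} \mid X^{(t)} = s]}{s} - 1 \le g(s) - \delta .
\]

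\textbf{Step 2 (additive drift theorem).} Let $Y_t := g(X^{(\min\{t,T\})}) + \delta \min\{t,T\}$. By Step 1, $Y_t$ is a supermartingale: before $T$ it loses at least $\delta$ in $g$ and gains exactly $\delta$ from the time term, and after $T$ it is constant. Hence $\mathbb{E}[Y_t] \le Y_0 = g(s_0)$. Since $g \ge 0$, this yields $\delta\, \mathbb{E}[\min\{t,T\}] \le g(s_0)$ for every $t$. Letting $t \to \infty$ and applying monotone convergence gives $\mathbb{E}[T] \le g(s_0)/\delta$, which is the claim. (An equivalent route is to telescope $\sum_{t} \Pr[T > t]\,\delta \le g(s_0)$.)

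\textbf{Main obstacle.} The only delicate point is Step 1 at the transition to $0$. There, the naive potential $\ln(s/s_{\min})$ would be undefined at $0$ or fail the inequality, and the shift by $1$ is what makes the argument work. A secondary technicality in Step 2 is that the conditioning is on the event $X^{(t)} = s$ rather than on the full history. I would handle this either by noting that the standard proof only needs the drift bound conditional on the current state, or, for a fully rigorous treatment, by assuming the drift condition with respect to the natural filtration, as in \cite{DoerrJW12algo}. In addition, $X^{(t)}$ may take uncountably many values, so conditional expectations should be read in the measure-theoretic sense; this does not change the argument.
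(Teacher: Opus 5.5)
The paper gives no proof of this statement and simply imports it from \cite{DoerrJW12algo}. Your argument is exactly the standard proof from that source, and it is correct: the shifted logarithmic potential $g(s)=1+\ln(s/s_{\min})$ with $g(0)=0$, the tangent bound $\ln z\le z-1$ (including the jump to $0$), and then additive drift. On the conditioning issue you flag, you do not need to strengthen the hypothesis to the natural filtration. The hypothesis at $s=0$ forces $X^{(t+1)}=0$ almost surely on $\{X^{(t)}=0\}$, so $\{T>t\}=\{X^{(t)}\neq 0\}$ up to null sets. Summing your Step~1 bound over the atoms of $X^{(t)}$ then gives $\mathbb{E}[g(X^{(t+1)})]\le\mathbb{E}[g(X^{(t)})]-\delta\Pr[T>t]$, i.e., $\mathbb{E}[Y_{t+1}]\le\mathbb{E}[Y_t]$, and this is all your Step~2 actually uses.
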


\subsection{Upper Bound for the Runtime of the SEMO on \goneminmax}~\label{ssec:upper}

We first bound the size of a pair-wise non-dominated set w.r.t. \goneminmax. 

\begin{lemma}
\label{lem:population_size}
    Let $P \subseteq [0..r-1]^n$ be any set of solutions that none weakly dominates the other w.r.t. \goneminmax. 
    Then $|P| \le nr-n+1$.
\end{lemma}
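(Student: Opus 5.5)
The plan is an injection argument. We map each element of $P$ to its objective vector and show that this map is injective with image contained in the Pareto front $F^*$ from Lemma~\ref{lem:pareto_front}. That front has exactly $nr-n+1$ elements, which gives the bound.

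\emph{Image lies in $F^*$.} For every $x\in[0..r-1]^n$ we have $f_1(x)+f_2(x)=\sum_{i=1}^n (r-1)=n(r-1)$. Moreover $f_1(x)\in[0..nr-n]$, being a sum of $n$ integers in $[0..r-1]$. Hence
\[
f(x)=\bigl(f_1(x),\,nr-n-f_1(x)\bigr)\in F^*,
\]
which is the same as saying that $f$ maps the whole search space into $F^*$. This is also immediate from Lemma~\ref{lem:pareto_front}, since every solution is Pareto optimal.

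\emph{Injectivity.} Let $x\neq y$ be two elements of $P$ with $f(x)=f(y)$. Then $f_i(x)\ge f_i(y)$ for both $i\in\{1,2\}$, so $x\succeq y$. This contradicts the assumption that no element of $P$ weakly dominates another. Therefore distinct elements of $P$ have distinct objective values, and
\[
|P|=|f(P)|\le|F^*|=nr-n+1 .
\]

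I do not expect a real obstacle here. The only point needing care is to use \emph{weak} domination: it is exactly the fact that no two elements weakly dominate each other that rules out equal objective vectors, and hence gives injectivity.
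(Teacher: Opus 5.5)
Your proof is correct and follows essentially the same route as the paper: the objective vectors of elements of $P$ lie in the Pareto front $F^*$ of size $nr-n+1$, and at most one element of $P$ can have each objective vector. You also spell out the step the paper leaves implicit, namely that two distinct elements with equal objective vectors would weakly dominate each other.
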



The following theorem provides an upper bound on the expected number of iterations until Algorithm~\ref{alg:semo} covers the whole Pareto front of \goneminmax. In contrast to the binary \oneminmax problem, where the objective value directly determines the number of flippable bits, solutions with the same objective value in our setting have a much more diverse structure. Hence a key part of the proof is to estimate the number of entries of a solution that can be mutated to derive an improvement in one objective. 
{More precisely, once the population contains an objective value, a neighboring value can be created by selecting a corresponding individual and mutating one suitable non-boundary coordinate in the correct direction. Summing the resulting waiting times over all missing front values yields the upper bound.}

\begin{theorem}
\label{the:upper_bound_SEMO}
    Consider Algorithm~\ref{alg:semo} optimizing \goneminmax. Then the expected number of iterations until the population covers the whole Pareto front is $O(n^2r^2\log n)$.
\end{theorem}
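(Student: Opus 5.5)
We track the set of $f_1$-values covered by the population. Since every solution is Pareto optimal (Lemma~\ref{lem:pareto_front}), two solutions are mutually non-dominated if and only if their $f_1$-values differ. Moreover, an offspring with a new $f_1$-value is always accepted, and the population always contains exactly one individual per covered value. Hence a covered $f_1$-value is never lost. The covered values form an integer interval $[a..b]$, because each mutation changes $f_1$ by exactly $\pm1$ (or by $0$ if it is infeasible). By Lemma~\ref{lem:population_size}, $|P_t|\le nr-n+1$ at all times. The proof therefore reduces to bounding the expected time to extend the upper end $b$ up to $nr-n$, and symmetrically the lower end $a$ down to $0$; the bound is the sum of these two times.

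\textbf{Extending the upper end.} Let $x$ be the individual with $f_1(x)=b$. With probability $1/|P_t|\ge 1/(nr-n+1)$ it is selected as parent. The offspring has $f_1$-value $b+1$ exactly when the mutation chooses a position $i$ with $x_i<r-1$ and increases it. This happens with probability $k(x)/(2n)$, where $k(x)$ is the number of positions not at value $r-1$. The key estimate is $k(x)\ge \lceil (nr-n-b)/(r-1)\rceil$. This holds because the deficit $nr-n-b=\sum_i (r-1-x_i)$ is spread over $k(x)$ positions, each contributing at most $r-1$. Let $d=nr-n-b$. The probability of extending $b$ in one iteration is then at least $\frac{1}{nr}\cdot\frac{d}{2n(r-1)}$, so the expected waiting time is $O(n^2r^2/d)$.

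One subtlety is that $x$ may be replaced by another individual with the same $f_1$-value, because ties are broken in favour of the offspring. However, the bound above depends only on $b$, not on the particular individual, so the estimate holds in every iteration regardless of which individual currently has value $b$. Summing over $d=1,\dots,nr-n$ gives $O(n^2r^2\log(nr))$. The same argument, with decreasing mutations, applies to the lower end.

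\textbf{Removing the $\log r$.} A direct sum yields $\log(nr)$ rather than $\log n$, so this factor must be removed. The split is into two phases. For $d\ge r-1$, group the values of $d$ into blocks of length $r-1$. Within the block where $\lceil d/(r-1)\rceil=j$, the success probability is at least $j/(2n^2r)$. The block therefore costs $O((r-1)\cdot n^2r/j)$, and summing over $j\le n$ gives $O(n^2r^2\log n)$. For $d<r-1$, the bound $k(x)\ge1$ gives cost $O(n^2r)$ per step, hence $O(n^2r^2)$ over the at most $r$ steps. Together these give the claimed $O(n^2r^2\log n)$.

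The initial position of $b$ plays no role in this argument. No Chernoff bound is needed, because the whole front is summed over in any case.

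The main obstacle is the estimate of $k(x)$. It has to be argued that it depends only on the objective value and not on the history of the population, which may replace individuals with equal $f_1$-value by differently structured ones. The deficit-based inequality handles this, but the block-wise summation is also necessary, since otherwise the bound would come out as $\log(nr)$ instead of $\log n$.
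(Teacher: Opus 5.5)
Your proof is correct and follows the paper's argument: a fitness-level bound on the two extreme covered $f_1$-values. It rests on the fact that at least $\lceil d/(r-1)\rceil = n-\lfloor b/(r-1)\rfloor$ entries can be moved in the right direction, whichever individual currently holds the extreme value, and grouping the $r-1$ equal terms for each $j\in[1..n]$ then gives a harmonic sum and hence $O(n^2r^2\log n)$ directly. Your intermediate $\log(nr)$ estimate and the separate treatment of $d<r-1$ are unnecessary, since the block $j=1$ already covers all $d\le r-1$; on the other hand, your explicit interval argument makes precise a step the paper uses only implicitly.
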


\subsection{Lower Bound for the SEMO Algorithm}\label{ssec:lower}

To derive a lower bound on the expected runtime of the SEMO, a technical difficulty is controlling the population size before the Pareto front is fully covered. Following the proof strategy introduced in~\cite{DoerrKO25}, we consider a \emph{delayed} SEMO that simplifies the analysis in this phase. The delayed SEMO is identical to the original SEMO (Algorithm~\ref{alg:semo}) except for the parent selection in line~\ref{line:parent_selection}, where it proceeds as follows. In each iteration, a value $ i \in [0..nr-n] $ is chosen uniformly at random. If there is no individual $ z \in P $ with $ f_1(z)=i $, the iteration is skipped; otherwise, the unique individual $x$ with $ f_1(x)=i $ is selected and the algorithm proceeds exactly as in line~4 of Algorithm~\ref{alg:semo}. This modification may introduce iterations that do not change the state of the  algorithm, but it has no influence otherwise. Hence it can only slow down the process. In particular, structural statements like the shape of the population at a particular stopping time are not affected. The advantage of working with the delayed SEMO is the more regular way parents are selected: In each iteration, the probability to choose a parent with some existing objective value is exactly the reciprocal of the size of the Pareto front. 
Since we will introduce another variant of the SEMO algorithm later on, we present the pseudocode of the delayed SEMO algorithm here to clearly distinguish and refer to the different algorithms. See Algorithm~\ref{alg:modified_semo}. 

\begin{algorithm2e}[!h]
\caption{The delayed $r$-valued SEMO with unit-strength local mutation to maximize $f:[0..r-1]^n \to \mathbb{R}^m$.}
\label{alg:modified_semo}
Initialize $x \in [0..r-1]^n$ uniformly at random and set $P_0 \leftarrow \{x\}$\;
\For{$t = 0,1,2,\dots$}{
    Choose $ i \in [0..nr-n] $ uniformly at random\;
    \If{ there is no $x \in P_t$ such that $f_1(x)=i$ }
    {
        Continue;
    }
    \Else{
        Select one individual $x$ such that $f_1(x)=i$;\label{stp:act}
    }
    Generate $y$ via choosing one entry of $x$ uniformly at random and modifying it by $\pm 1$ with equal probability\;
    \If{there is no $z \in P_t$ such that $z \succ y$}
    {\label{line:dominates}
        $P_{t+1} \leftarrow \{z \in P_t \mid y \nsucceq z\} \cup \{y\}$\;
    }
    \Else{
        $P_{t+1} \leftarrow P_t$\;
    }
}
\end{algorithm2e}

We use Algorithm~\ref{alg:modified_semo} for analysis until the size of the population has grown to linear order $ \Theta(n(r-1)) $ and then switch back to the original SEMO to then analyze the time required to cover the Pareto front.
In Lemma~\ref{lem:low_SEMO_1}, we show that the size of the population grows to $\Theta(n(r-1))$ within $O(n^2(r-1)^2)$ iterations with high probability. The key idea of the proof is that as long as the population is still small, there always exists a missing neighboring objective value, which can be generated with a constant probability conditioned on selecting a suitable individual. Repeatedly filling such gaps, the size of the population increases steadily until it reaches the desired order.
This result provides a lower bound on the population size that will be crucial in the final lower-bound analysis.

\begin{lemma}
\label{lem:low_SEMO_1}
    Consider Algorithm~\ref{alg:modified_semo} optimizing \goneminmax. Then with probability $1-\exp(-\Theta(n))$, the population contains at least $n(r-1)/130$ individuals after at most $8n^2(r-1)^2/129$ iterations.
\end{lemma}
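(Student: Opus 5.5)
The plan is to use the structure of \goneminmax to reduce population growth to a sum of geometric waiting times, and then apply Theorem~\ref{the:witt14}.

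\emph{Structural facts.} By Lemma~\ref{lem:pareto_front}, every solution is Pareto optimal. Hence an offspring is never rejected, and it only removes an individual with the same $f_1$-value. Consequently:
\begin{itemize}
\item the set $F_t:=f_1(P_t)$ is non-decreasing in $t$, and $|P_t|=|F_t|$;
\item since unit-strength local mutation changes $f_1$ by exactly $\pm 1$ (or not at all, for an infeasible mutation), a new value is always adjacent to an existing one;
\item by induction, $F_t$ is an integer interval $[a_t..b_t]$ containing $f_1(x^{(0)})$, where $x^{(0)}$ is the initial solution.
\end{itemize}
It therefore suffices to show that the upper end $b_t$ grows by $n(r-1)/130$ fast enough. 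This is pessimistic, since it ignores growth at $a_t$.

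\emph{Initial solution.} Each entry of $x^{(0)}$ is uniform on $[0..r-1]$, so $\mathbb{E}[f_1(x^{(0)})]=\mathbb{E}[f_2(x^{(0)})]=n(r-1)/2$. Apply Theorem~\ref{the:chernoff} to $f_2$ with a suitable constant $\delta$. This gives $f_1(x^{(0)})\le (1+\delta)n(r-1)/2$ with probability $1-\exp(-\Omega(n))$, using that the exponent is $\delta^2 n/4$.

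\emph{Per-iteration progress.} Condition on this event, and suppose $|P_t|<n(r-1)/130$. Then
\[
b_t\le (1+\delta)n(r-1)/2+n(r-1)/130 =: c\,n(r-1)
\]
for a constant $c<1$. The individual $x$ with $f_1(x)=b_t$ has at most $b_t/(r-1)\le cn$ entries equal to $r-1$. Hence at least $(1-c)n$ of its entries can be increased. In the delayed SEMO, $x$ is selected with probability exactly $1/(nr-n+1)$. A uniformly chosen such entry is then increased with probability at least $(1-c)/2$. So each iteration increases $b_t$ with probability at least
\[
p:=\frac{1-c}{2(nr-n+1)}=\Theta\!\left(\frac{1}{n(r-1)}\right).
\]
This bound holds independently of the history.

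\emph{Tail bound.} The time to obtain $k:=\lceil n(r-1)/130\rceil$ increments of $b_t$ is therefore stochastically dominated by a sum $T^*$ of $k$ independent geometric variables with success probability $p$. We have $\mathbb{E}[T^*]=k/p$. Apply Theorem~\ref{the:witt14} with $\lambda$ a constant fraction of $k/p$, and $s=k/p^2$, $\pmin=p$. Then both $\lambda^2/s$ and $\lambda\pmin$ are $\Theta(k)=\Omega(n)$, so $T^*\le k/p+\lambda$ with probability $1-\exp(-\Omega(n))$. A union bound with the Chernoff event finishes the argument.

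\emph{Main obstacle.} The step I expect to need care is the constant bookkeeping: $\delta$, $\lambda$, and the $+1$ in $nr-n+1$ must be chosen so that $k/p+\lambda\le 8n^2(r-1)^2/129$. If the upper end alone is too tight for this, I would instead use both ends: whichever of $x_{b_t}$ and $x_{a_t}$ is closer to the middle gives a larger $p$. I would also handle the stochastic domination carefully, either by a coupling with independent Bernoulli trials or by the standard "at least $p$ conditional on any history" argument.
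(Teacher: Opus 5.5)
Your proof is correct. It secures the per-iteration success probability differently from the paper; the rest (domination by a sum of geometric variables, then Theorem~\ref{the:witt14}) is the same.

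The paper never looks at the initial solution. It argues deterministically that whenever $|P_t|<n(r-1)/130$, one of two cases holds:
\begin{itemize}
\item Some individual has $f_1\le\lfloor 129n(r-1)/260\rfloor$. Then the first missing $f_1$-value above it can be created from a parent with at least $129n/260$ increasable entries.
\item All $f_1$-values exceed this threshold. Then the minimum can be decreased using at least $129n/260$ positive entries.
\end{itemize}
This gives $p=129/(520(n(r-1)+1))$ from every state. Combined with your interval observation, this is precisely your ``both ends'' fallback: the first case is the upper end $b_t$, the second the lower end $a_t$.

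Your main route works only at $b_t$. It pays for this with a Chernoff bound on $f_1(x^{(0)})$. That adds a failure event of probability $\exp(-\delta^2 n/4)$ and makes the argument depend on the random initialization. For example, from $x^{(0)}=(r-1)^n$ your argument gives nothing, whereas the paper's still applies. In exchange, your argument is one-sided, and the interval structure makes the location of the fillable gap explicit.

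The constants you were worried about do work out. The requirement $2/(130(1-c))<8/129$ amounts to $c<0.752$, which any constant $\delta<0.48$ guarantees. For instance, $\delta=1/4$ gives $c<0.633$ and $k/p\approx 0.042\,n^2(r-1)^2$. This is below the budget $8n^2(r-1)^2/129\approx 0.062\,n^2(r-1)^2$ and leaves $\lambda$ a constant fraction of $k/p$ for the tail bound.
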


{For a population $P$, we call the two extreme objective vectors $(0,n(r-1))$ and $(n(r-1),0)$ the \emph{Pareto borders}. We measure the distance of $P$ to these borders by 
\[d_{PF}(P):=\min\Big\{\min_{z\in P}f_1(z),\min_{z\in P}f_2(z)\Big\}.\]
Hence $d_{PF}(P)=0$ is necessary for covering the full Pareto front.}
The following lemma shows that, with high probability, the distance of the population to the Pareto borders is still considerable within $n^2(r-1)^2/16$ iterations. The proof relies on a concentration argument for the random initialization and on bounding the maximum drift towards the Pareto border with unit-strength local mutation.

\begin{lemma}
\label{lem:low_SEMO_2}
    Consider Algorithm~\ref{alg:modified_semo} optimizing \goneminmax. Then with probability at least $1-\exp (-\Theta(n))$, in all iterations $t \in [0..n^2(r-1)^2/16]$, the distance of the population to the Pareto borders is at least $n(r-1)/8$.
\end{lemma}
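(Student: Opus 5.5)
Two things drive the proof: the random initial solution lies far from both borders, and $d_{PF}$ can fall by at most one per iteration, and only when a current extreme individual is selected.

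\textbf{Initialization.} Let $x$ be the initial solution. Then $f_1(x)=\sum_i x_i$ is a sum of $n$ independent variables in $[0,r-1]$ with expectation $n(r-1)/2$. Theorem~\ref{the:chernoff} with $\delta=1/2$ gives $f_1(x)\ge n(r-1)/4$ with probability at least $1-\exp(-n/16)$. The same holds for $f_2(x)$ by symmetry, so with probability $1-\exp(-\Theta(n))$ we have $d_{PF}(P_0)\ge n(r-1)/4$.

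\textbf{Per-step movement.} Since all solutions are Pareto optimal and the population keeps one individual per objective value, $m_t:=\min_{z\in P_t} f_1(z)$ changes only when the offspring has $f_1$-value $m_t-1$. Unit-strength mutation changes $f_1$ by exactly $1$, so this offspring must come from the individual with $f_1=m_t$. In Algorithm~\ref{alg:modified_semo} that individual is selected with probability exactly $1/(n(r-1)+1)$. Hence $m_t$ is non-increasing, drops by at most one per iteration, and drops with probability at most $p:=1/(n(r-1)+1)$, independently of the past. The same argument applies to $M_t:=\min_z f_2(z)$.

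\textbf{Counting decreases.} Set $T=n^2(r-1)^2/16$. The number of decreases of $m$ in $T$ iterations is stochastically dominated by a $\mathrm{Bin}(T,p)$ variable. Its mean is $Tp\le n(r-1)/16$. We need it to stay below $n(r-1)/8$, which is twice the mean. A multiplicative Chernoff upper tail (or Theorem~\ref{the:witt14}, viewing the waiting times between decreases as geometric with parameter $p$) gives failure probability $\exp(-\Theta(n(r-1)))\le\exp(-\Theta(n))$.

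On the joint good event, $m_t\ge n(r-1)/4-n(r-1)/8=n(r-1)/8$ for all $t\le T$. The same bound holds for $M_t$. A union bound over the two objectives and the initialization event yields the claim.

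\textbf{Main obstacle.} The delicate point is the domination argument. The decrease probability is at most $p$ in every iteration, conditionally on the entire history, including when the initial population is a single individual. The delayed SEMO makes this exact, which is why it is used. I would formalize this by coupling the process with independent Bernoulli$(p)$ indicators that upper-bound each decrease event.
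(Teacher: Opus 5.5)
Your proof is correct and follows the paper's argument. Both use a Chernoff bound on the initial solution, then the observation that in the delayed SEMO each minimum objective value falls by at most one per iteration, and only with probability at most $1/(n(r-1)+1)$, and finish with a concentration bound and a union bound over the two objectives. The only difference is which tail bound you use: you bound the number of decrements in the fixed window by a $\mathrm{Bin}(T,p)$ upper tail, whereas the paper lower-bounds the time for about $n(r-1)/8$ decrements by a sum of geometric variables and applies Witt's lower-tail bound. Your version is slightly cleaner because it does not need the paper's extra lower bound on the decrease probability to control $s$.
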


As mentioned above, we switch back to the analysis of the original SEMO algorithm. Conditioning on a sufficiently large population but a considerable distance to the Pareto borders, we analyze the expected time required to reach the Pareto borders. {Since reaching one of the borders is necessary for covering the full Pareto front, we follow the progress of the smaller attained objective value towards the nearest border. The proof splits this progress into a far-from-border phase and a close-to-border phase. In both phases, we upper-bound the probability of decreasing the remaining distance by one, which yields lower bounds on the required waiting times.}

\begin{lemma}
\label{lem:low_SEMO_3}
    Consider Algorithm~\ref{alg:semo} optimizing \goneminmax, starting with a population size of at least $\frac{n(r-1)}{130}$ and the distance to the Pareto borders of at least $n(r-1)/8$. Then, the expected number of iterations until the population covers the whole Pareto front is $\Omega(n^2r(r+\log n))$.
\end{lemma}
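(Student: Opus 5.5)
We track the distance $D_t := d_{PF}(P_t)$ and bound the time until it reaches $0$; covering the front requires this. We may assume $D_0\ge n(r-1)/8$ and $|P_0|\ge n(r-1)/130$. Two facts drive everything. First, the population size never decreases: all solutions are Pareto optimal, and a new offspring only replaces an individual with the same objective vector. Hence $|P_t|\ge n(r-1)/130$ for all $t$. Second, $D$ can decrease by at most one per iteration, and only in one way. We must select an individual $x$ attaining the minimum (say, $f_1(x)=D_t$, or the symmetric $f_2$ case), choose a coordinate with $x_i\ge 1$, and decrease it. There are at most two such extremal individuals (one per objective). So the probability of reducing $D$ in an iteration is at most
\[
p(d)\le \frac{2}{|P_t|}\cdot\frac{\min\{n,\,d\}}{n}\cdot\frac12 \le \frac{130\,\min\{n,d\}}{n^2(r-1)},
\]
using that the number of nonzero entries of $x$ is at most $\min\{n,f_1(x)\}$.

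\textbf{Far-from-border phase.} While $d\ge n$, the success probability is at most $130/(n(r-1))$. Reducing $D$ from $n(r-1)/8$ down to $n$ requires at least $n(r-1)/8-n=\Omega(nr)$ successes, assuming $r\ge 3$ so that $(r-1)/8-1$ is a positive multiple of $r$ (for small constant $r$ this phase is simply not needed). The waiting time for each success stochastically dominates a geometric variable with that probability. Summing gives an expected time of $\Omega(nr)\cdot n(r-1)/130=\Omega(n^2r^2)$.

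\textbf{Close-to-border phase.} For $d<n$, going from distance $d$ to $d-1$ takes expected time at least $n^2(r-1)/(130d)$. The process must pass through every value $d=\min\{n-1,D\},\dots,1$, since it moves by at most one step. Summing the harmonic series gives $\Omega(n^2 r\log n)$.

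Adding the two phases yields $\Omega(n^2r^2+n^2r\log n)=\Omega(n^2r(r+\log n))$ by linearity of expectation. Formally, we couple with a sequence of waiting times. Each level $d$ must be left at least once, and every iteration spent at level $d$ reduces it with conditional probability at most $p(d)$, regardless of history. So the expected number of iterations at level $d$ is at least $1/p(d)$. This holds even though $|P_t|$ and the extremal individual change over time, because the bound on $p(d)$ is uniform. Since both phases only give lower bounds on expected waiting times, Theorem~\ref{the:witt14} is not needed here.

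\textbf{Main obstacle.} The delicate step is justifying that $D$ drops only via the extremal individuals with probability at most $p(d)$. I would argue it as follows. Any offspring $y$ with $f_1(y)<D_t$ must have a parent $x$ with $f_1(x)=D_t$, because a unit mutation changes $f_1$ by exactly $\pm1$. By the non-domination structure, there is a unique such $x$ in $P_t$, and likewise for $f_2$. Infeasible mutations return the parent and are harmless. Iterations in which $D$ stays the same or increases only help the lower bound. The monotonicity of $|P_t|$ also needs care: an accepted offspring only removes individuals it weakly dominates, which for \goneminmax means only individuals with identical objective values. So the size stays the same or grows.
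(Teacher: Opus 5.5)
Your proof is correct and is essentially the paper's argument: the same split at distance $n$, the same per-iteration bounds $130/(n(r-1))$ and $130d/(n^2(r-1))$, and the same summation of geometric waiting times. The only differences are that you track $d_{PF}$ (two extremal individuals, offset by the factor $\tfrac12$) rather than $\min_z f_1(z)$, and that you justify explicitly the monotonicity of $|P_t|$, which the paper uses tacitly. One small slip is that $(r-1)/8-1=(r-9)/8$ is positive only for $r>9$ (and $\Omega(r)$ for, e.g., $r\ge 18$), not for all $r\ge 3$; your remark that for constant $r$ the close-to-border phase alone suffices repairs this, just as it does for the paper's $(r-9)$ factor.
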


Combining the above three lemmas, we derive the following theorem. 

\begin{theorem}
\label{the:lower_bound_SEMO}
    Consider Algorithm~\ref{alg:semo} optimizing \goneminmax. Then
    the expected number of iterations until the population covers the whole Pareto front is $\Omega(n^2r(r+\log n))$.
\end{theorem}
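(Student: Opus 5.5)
The plan is to combine Lemmas~\ref{lem:low_SEMO_1}, \ref{lem:low_SEMO_2} and~\ref{lem:low_SEMO_3} through a coupling between the delayed SEMO (Algorithm~\ref{alg:modified_semo}) and the original SEMO (Algorithm~\ref{alg:semo}). The delayed SEMO only inserts idle iterations. Hence we can couple it with the original SEMO so that the sequence of populations of the original SEMO equals the sequence of \emph{distinct-step} populations of the delayed SEMO, that is, the populations with the skipped iterations removed. Under this coupling, the original SEMO reaches any given population state no later than the delayed SEMO does. Moreover, every iteration of the original SEMO corresponds to at least one iteration of the delayed SEMO.

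\textbf{Step 1: define a good event.} Let $T_1 := 8n^2(r-1)^2/129$. Note that $T_1 \le n^2(r-1)^2/16$, since $8/129 < 1/16$. Consider the delayed SEMO and let $E$ be the event that both of the following hold:
\begin{itemize}
\item[(i)] by some iteration $\tau \le T_1$ the population has size at least $n(r-1)/130$ (Lemma~\ref{lem:low_SEMO_1});
\item[(ii)] in all iterations up to $n^2(r-1)^2/16 \ge T_1$, the distance $d_{PF}$ of the population to the Pareto borders is at least $n(r-1)/8$ (Lemma~\ref{lem:low_SEMO_2}).
\end{itemize}
By a union bound, $\Pr[E] \ge 1 - \exp(-\Theta(n))$, which is at least $1/2$ for $n$ large. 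On $E$, at the stopping time $\tau$ the delayed population satisfies both hypotheses of Lemma~\ref{lem:low_SEMO_3}. The population is a structural quantity, so it is unaffected by the delays. Therefore the original SEMO, coupled as above, passes through this same population at some time $\tau' \le \tau$.

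\textbf{Step 2: restart from $\tau'$.} By the strong Markov property applied at the stopping time $\tau'$, the remaining process of the original SEMO is the original SEMO started from a population $P$ that satisfies the hypotheses of Lemma~\ref{lem:low_SEMO_3}. That lemma is uniform over all such starting populations, so the expected remaining time to cover the front is $\Omega(n^2 r(r+\log n))$.

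\textbf{Step 3: conclude.} Taking the expectation of the total runtime and restricting to $E$ gives
\[
\mathbb{E}[T] \ge \Pr[E] \cdot \Omega(n^2 r(r+\log n)) = \Omega(n^2 r(r+\log n)).
\]
The time spent before $\tau'$ is nonnegative and can simply be dropped. For small $n$ the statement holds trivially by adjusting constants.

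The main obstacle is the coupling and the strong Markov step. We must justify that the original SEMO actually visits the population reached by the delayed process at $\tau$. We must also check that conditioning on $E$, which involves future behaviour of the delayed process through (ii), does not bias the subsequent evolution.

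To avoid this bias, I would redefine the stopping time as the first time the population has size at least $n(r-1)/130$. On $E$, this time is at most $T_1$, and the distance condition at that time is implied by (ii). The event that the population at this stopping time satisfies both conditions is measurable with respect to the history up to the stopping time. Hence the strong Markov property applies cleanly: the expected remaining time, conditioned on this history event, is bounded by Lemma~\ref{lem:low_SEMO_3}.
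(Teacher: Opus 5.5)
Your proof is correct and follows essentially the same route as the paper. You run the delayed SEMO, combine Lemmas~\ref{lem:low_SEMO_1} and~\ref{lem:low_SEMO_2} by a union bound to reach, with probability $1-\exp(-\Theta(n))$, a population of size at least $n(r-1)/130$ at distance at least $n(r-1)/8$ from the borders, transfer this to the original SEMO because the delayed version only inserts idle iterations, and restart there to apply Lemma~\ref{lem:low_SEMO_3}. Two points the paper leaves implicit are handled more carefully in your version: the check that $8n^2(r-1)^2/129 \le n^2(r-1)^2/16$, and taking the stopping time as the first time the population reaches size $n(r-1)/130$, which makes the good event measurable at that time.
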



\subsection{A Tighter Bound for a Variant of the SEMO Algorithm}\label{ssec:tight}

We note that the two bounds proven so far are not absolutely tight, but differ by a factor of $O(\log n)$. Due to the complicated population dynamics in a run of an MOEA, tight runtime bounds are generally hard to obtain, and in fact, the only ones in this field we are aware of are those in  \cite{LaumannsTZWD02,DoerrZ21aaai,BossekS24,DoerrQ23LB,DoerrKO25,Opris26aaai}. To try to shed more light on the true runtime of the multi-valued SEMO, we now consider a mild modification of the SEMO. It differs from the original algorithm only in the tie-breaking rule in line~\ref{line:dominates} of Algorithm~\ref{alg:semo}, where now we accept only offspring that are not even weakly dominated. In other words, if a solution with some objective value is already present in the population, then it is not replaced by a later solution with equal objective value. We note that this is exactly the algorithm proposed in the original work introducing the SEMO \cite{LaumannsTZWD02}, and only later the research community gradually adopted the version presented in Algorithm~\ref{alg:semo}. To avoid any ambiguity, the pseudocode of the algorithm regarded in this subsection can be found in Algorithm~\ref{alg:variant_of_SEMO}.

\begin{algorithm2e}[!h]
\caption{The variant of the $r$-valued SEMO that does not accept solutions which are weakly dominated. As before, we use unit-strength local mutation and maximize $f:[0..r-1]^n \to \mathbb{R}^m$.}
\label{alg:variant_of_SEMO}
Initialize $x \in [0..r-1]^n$ uniformly at random and set $P_0 \leftarrow \{x\}$\;
\For{$t = 0,1,2,\dots$}{
    Select one individual $x$ uniformly at random from $P_t$\;
    Generate $y$ via choosing one entry of $x$ uniformly at random and modifying it by $\pm 1$ with equal probability\;
    \If{there is no $z \in P_t$ such that $z \succeq y$}
    {\label{line:dominates}
        $P_{t+1} \leftarrow \{z \in P_t \mid y \nsucceq z\} \cup \{y\}$\;
    }
    \Else{
        $P_{t+1} \leftarrow P_t$\;
    }
}
\end{algorithm2e}

For Algorithm~\ref{alg:variant_of_SEMO}, we can prove an upper bound that agrees with the previously shown lower bound (which is valid here as well). 
{The proof is based on drift analysis and exploits the modified tie-breaking rule, which rejects offspring with an already represented objective vector. This removes neutral replacements and allows us to define a potential function that decreases only when real progress towards one of the two extreme Pareto-optimal solutions, $0^n$ and $(r-1)^n$, is made. A multiplicative-drift argument then gives the desired upper bound.}
Since we do not believe that the small difference of accepting or not solutions with equal objective value changes the asymptotic runtime (which will be supported by our experiments), we see this result as a strong indication for the true runtime of the $r$-valued SEMO.

\begin{theorem}
\label{the:Theta_bound}
    Consider optimizing \goneminmax via Algorithm~\ref{alg:semo} with the modification that it accepts only strictly better solutions. Then the expected number of iterations until the population covers the whole Pareto front is $\Theta(n^2r(r+\log n))$.
\end{theorem}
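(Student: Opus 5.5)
The lower bound needs no new work. I would check that the three lemmas behind Theorem~\ref{the:lower_bound_SEMO} also hold for the modified tie-breaking rule. Lemma~\ref{lem:low_SEMO_1} only uses the creation of missing neighbouring values, which is accepted by both algorithms. Lemma~\ref{lem:low_SEMO_2} bounds the drift of the extreme objective values, which is unchanged. Lemma~\ref{lem:low_SEMO_3} upper-bounds the probability that the smaller extreme value improves, which again does not depend on how ties are broken. So it remains to prove the upper bound $O(n^2r(r+\log n))$.

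\textbf{Structure of the population.} Since every solution is Pareto optimal (Lemma~\ref{lem:pareto_front}) and an offspring with an already present objective vector is rejected, every individual, once created, stays in the population forever. Unit-strength mutation changes $f_1$ by exactly $\pm1$. Hence the set of covered $f_1$-values is always an interval $[L_t..U_t]$ containing $f_1$ of the initial solution, and it suffices to bound the expected time until $U_t=n(r-1)$; the case $L_t=0$ is symmetric. Let $x^{U}$ be the unique individual with $f_1=U_t$; it is frozen until $U_t$ grows. Let $k$ be the number of its coordinates below $r-1$. A new maximum is created exactly when $x^U$ is selected and one of these $k$ coordinates is increased. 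By Lemma~\ref{lem:population_size} this has probability at least $k/(2n\cdot(nr-n+1))$ per iteration. The new top individual is $x^U$ with one uniformly random non-maximal coordinate increased.

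\textbf{A combinatorial process.} The sequence of top individuals is therefore the following process on $[0..r-1]^n$: repeatedly increase a uniformly random coordinate that is below $r-1$. If $k_s$ denotes $k$ at the $s$-th step, the expected total time is at most
\[
O(n^2r)\cdot \mathbb{E}\Big[\sum_{s} 1/k_s\Big].
\]
The whole task is thus to show $\mathbb{E}[\sum_s 1/k_s]=O(r+\log n)$.

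The naive estimate $k_s\ge d_s/(r-1)$, with $d_s$ the remaining distance, only gives $O(r\log n)$, which would reproduce Theorem~\ref{the:upper_bound_SEMO}. To beat it, I would note that for each currently active coordinate $i$ we have $1/k_s=\Pr[i\text{ is chosen in step } s]$. So $\sum_s 1/k_s$ is the expected number of increments received by a coordinate that is active throughout, and any single coordinate can receive at most $r-1$ increments.

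I would make this rigorous with a potential that includes a logarithmic term, roughly
\[
\Phi = \sum_{i:\,x_i<r-1}\frac{r-1-x_i}{r-1}\;+\;c\ln(k+1).
\]
The aim is that each step lowers $\Phi$ in expectation by $\Omega\big(1/((r+\log n)k)\big)$. Because the step takes $\Theta(n^2r/k)$ iterations, this translates into an expected decrease of $\Omega\big(1/(n^2r(r+\log n))\big)$ per iteration. Additive drift would then finish, since initially $\Phi=O(n+\log n)$, or I would adapt the multiplicative drift theorem (Theorem~\ref{the:drift analysis}) by splitting into a phase with $k\ge n/r$ and a phase with $k<n/r$.

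\textbf{Main obstacle.} The coupling between $k$ and the remaining distance is the hard step. A plain multiplicative potential loses a factor $r$ in the $\log n$ term. The argument sketched above is biased by conditioning on which coordinate is the last one to finish, and this bias must be controlled. I would first try to handle it by comparing the process with $k$ independent coordinates racing to $r-1$, which are increased at Poisson times. In that comparison, the time at which $k$ drops to $j$ can be controlled by order statistics of sums of geometric variables, for which Theorem~\ref{the:witt14} is available.
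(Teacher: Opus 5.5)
\textbf{Where the proposal falls short.} Your treatment of the lower bound and your structural reduction are correct and agree with the paper. Under strict acceptance, the individual with maximal $f_1$-value is frozen until it is improved. Every improvement raises a uniformly random coordinate of it that is below $r-1$, and step $s$ costs at most $2n(nr-n+1)/k_s$ iterations in expectation.

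The gap is that the estimate carrying all of the difficulty, $\mathbb{E}[\sum_s 1/k_s]=O(r+\log n)$, is never proved. You call it the main obstacle and list approaches you would try. The one concrete route you spell out cannot deliver it:
\begin{itemize}
\item The per-step target $\Omega(1/((r+\log n)k))$ is met trivially by the first summand of $\Phi$ alone, which drops by exactly $1/(r-1)$ in every step.
\item That weakness is fatal. For a random start $\Phi_0$ is of order $n$, so additive drift with per-iteration drift $\Omega(1/(n^2r(r+\log n)))$ yields only $O(n^3r(r+\log n))$, a factor $n$ too large.
\item Your remark that a plain multiplicative potential loses a factor $r$ in front of $\log n$ is true for the linear distance $\sum_i(r-1-x_i)$ of the top individual. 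It is not true of multiplicative potentials in general.
\end{itemize}

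\textbf{The missing idea.} The paper uses an exponentially weighted potential, inspired by the RLS analysis of Doerr, Doerr, and K\"otzing: $g(P_t)=\sum_{i=1}^n(2^{r-1-x_i}-1)$ for the frozen top individual $x$.
\begin{itemize}
\item Raising a coordinate with $m=r-1-x_i\ge 1$ lowers its term by $2^{m-1}\ge(2^m-1)/2$.
\item The top individual is selected with probability at least $1/(nr-n+1)$, and each such move then has probability $1/(2n)$. Hence $\mathbb{E}[g(P_t)-g(P_{t+1})\mid P_t]\ge g(P_t)/(4n(nr-n+1))$.
\item Since $g$ is a nonnegative integer bounded by $n2^{r-1}$, Theorem~\ref{the:drift analysis} gives $\mathbb{E}[T_1]=O(n^2r\ln(n2^{r-1}))=O(n^2r(r+\log n))$.
\end{itemize}
This works directly in iterations, with no need for the reduction to $\sum_s 1/k_s$. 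Taking the logarithm of an exponential potential makes $r$ and $\log n$ add instead of multiply.

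\textbf{Your Poisson idea would also work.} It can be completed, and it is the same argument in disguise. Put independent rate-$1$ clocks on the coordinates. Then $\mathbb{E}[\sum_s 1/k_s]$ equals exactly $\mathbb{E}[\max_i S_i]$, where the $S_i$ are independent sums of $r-1-x_i\le r-1$ standard exponentials. Since $\mathbb{E}[e^{S_i/2}]=2^{r-1-x_i}$, which is precisely the paper's weight, Markov's inequality and a union bound give $\Pr[\max_i S_i\ge t]\le n2^{r-1}e^{-t/2}$. Hence $\mathbb{E}[\max_i S_i]\le 2\ln(n2^{r-1})+2=O(r+\log n)$. As submitted, however, neither argument is carried out.
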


\section{Experiments}

Above, we used Algorithm~\ref{alg:variant_of_SEMO} as a model for the SEMO which we could analyze well enough to obtain asymptotically tight bounds for the runtime. Our strong belief was that the small difference in the selection procedure has little influence on the performance. We now support this belief with {experiments on both \goneminmax and \glotz}.


For \goneminmax, we conduct two sets of experiments. In the first set, we fix $r = 4$ and consider the problem sizes $n \in \{20,40,60,80,100\}$; in the second, we fix $n = 100$ and consider the $r$-values $r \in \{2,3,4,5\}$. For each setting, we conduct 100 independent runs of both algorithms and record the first iteration after which the population covers the full Pareto front.
{We repeat the same experimental setup for \glotz, which has the same front size $n(r-1)+1$ but only one Pareto-optimal solution for each objective vector. This allows us to test whether our empirical conclusions are robust beyond the special \goneminmax case, where all solutions are Pareto optimal.}


Figures~\ref{fig:r_4} and~\ref{fig:n_100} display the observed runtimes (averages and standard deviations). The results indicate not greater differences between the original SEMO algorithm (Algorithm~\ref{alg:semo}) and its variant Algorithm~\ref{alg:variant_of_SEMO}; possibly the original SEMO performs slightly better.
{Figures~\ref{fig:glotz_r_4} and~\ref{fig:glotz_n_100} show the corresponding \glotz results. Again Algorithm~\ref{alg:variant_of_SEMO} behaves similarly to the original SEMO algorithm.}
Hence our experiments support our belief that the small variation in the selection does not drastically change the algorithm behavior, and more specifically, that our improved upper bound on the runtime shows for  Algorithm~\ref{alg:variant_of_SEMO} is likely to hold also for the SEMO.

\begin{figure}[!h]
    \centering
    \includegraphics[width=1\linewidth]{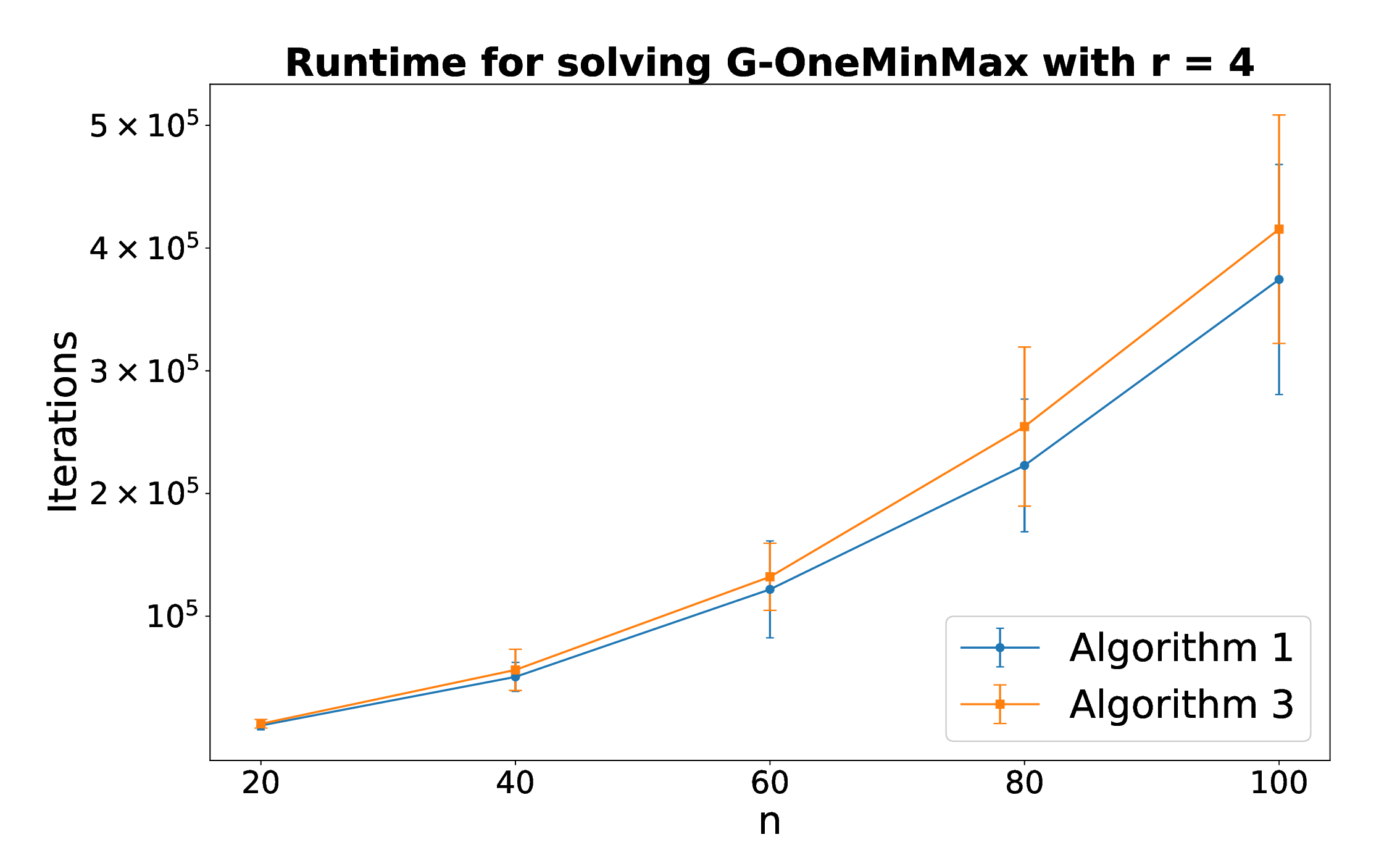}
    \caption{The mean (with standard deviations) number of iterations of Algorithms~\ref{alg:semo} and~\ref{alg:variant_of_SEMO} for solving \goneminmax with $n\in \{20,40,60,80,100\}$ and $r=4$ in 100 independent runs.}
    \label{fig:r_4}
\end{figure}
\begin{figure}[!h]
    \centering
    \includegraphics[width=1\linewidth]{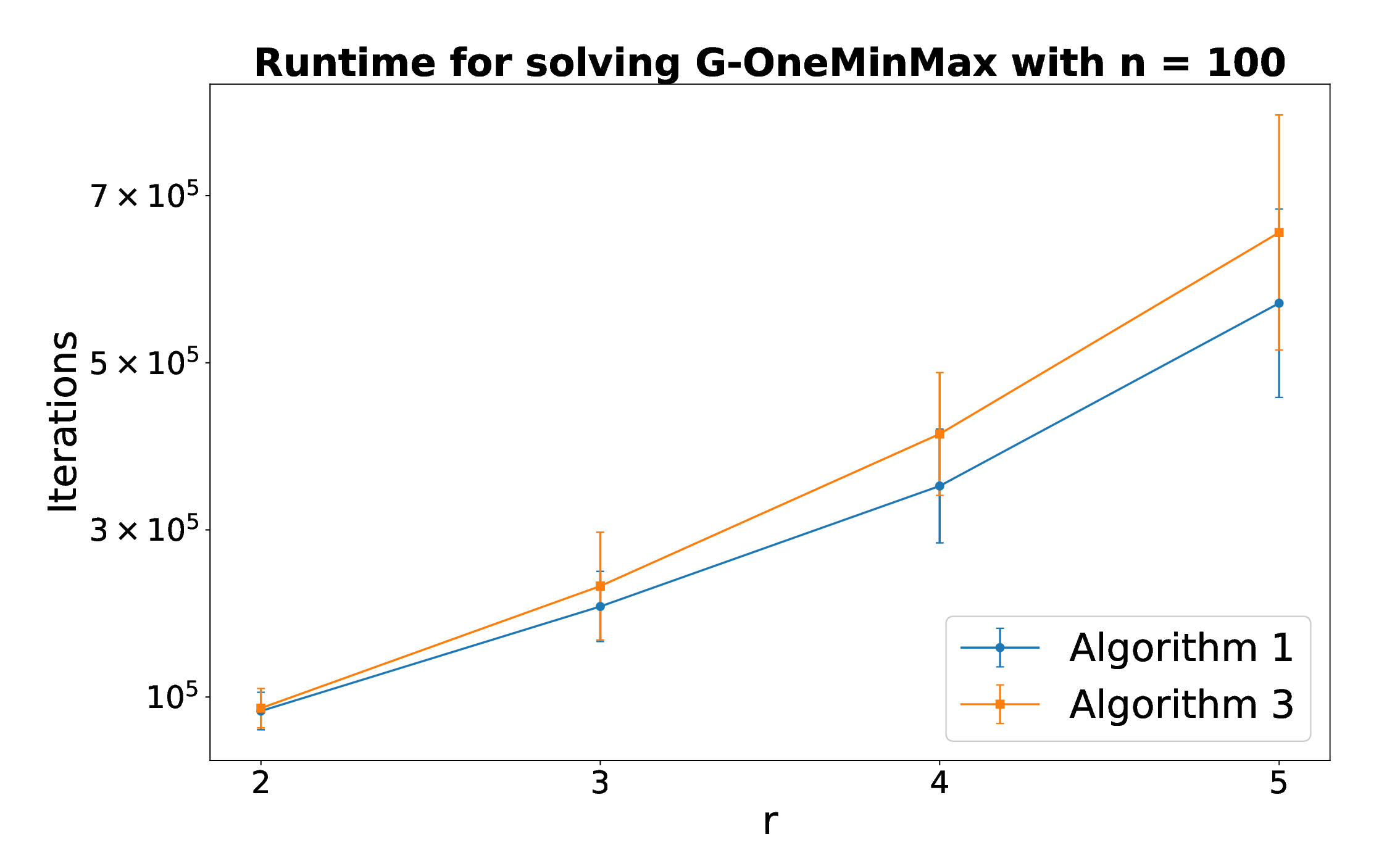}
    \caption{The mean (with standard deviations) number of iterations of Algorithms~\ref{alg:semo} and~\ref{alg:variant_of_SEMO} for solving \goneminmax with $n=100$ and $r\in [2..5]$ in 100 independent runs.}
    \label{fig:n_100}
\end{figure}

\begin{figure}[!h]
    \centering
    \includegraphics[width=1\linewidth]{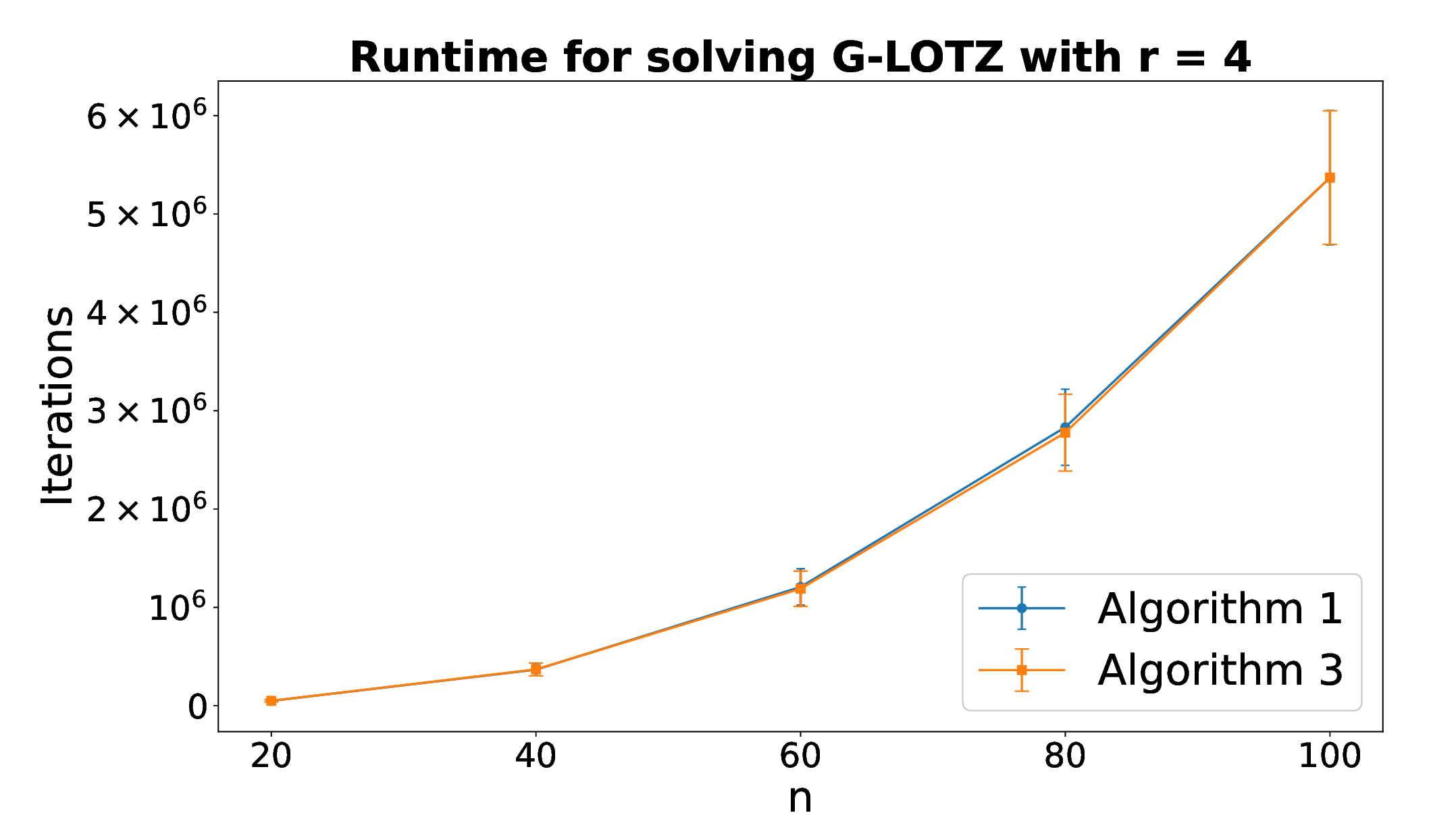}
    \caption{The mean (with standard deviations) number of iterations of Algorithms~\ref{alg:semo} and~\ref{alg:variant_of_SEMO} for solving \glotz with $n\in \{20,40,60,80,100\}$ and $r=4$ in 100 independent runs.}
    \label{fig:glotz_r_4}
\end{figure}
\begin{figure}[!h]
    \centering
    \includegraphics[width=1\linewidth]{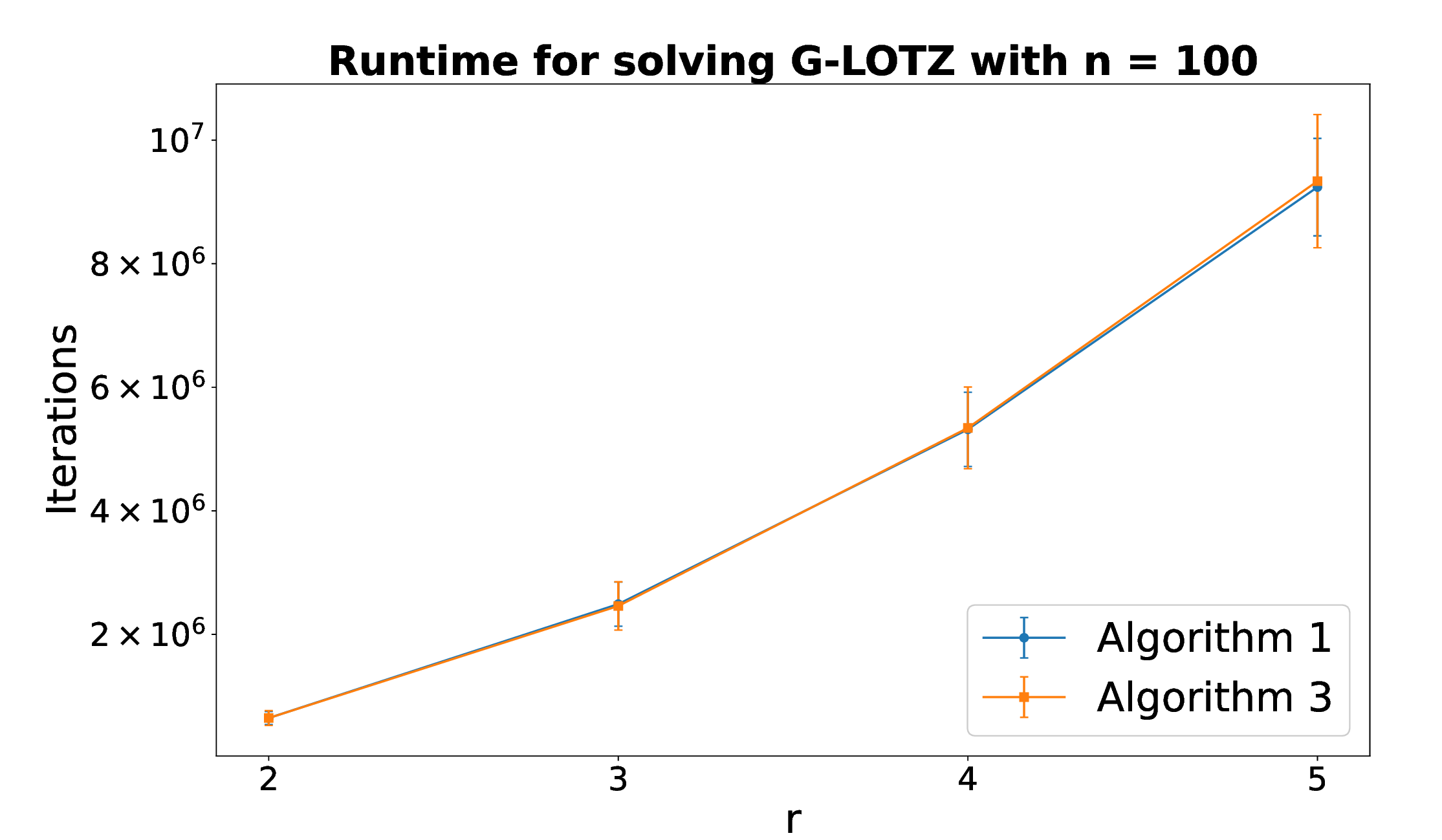}
    \caption{The mean (with standard deviations) number of iterations of Algorithms~\ref{alg:semo} and~\ref{alg:variant_of_SEMO} for solving \glotz with $n=100$ and $r\in [2..5]$ in 100 independent runs.}
    \label{fig:glotz_n_100}
\end{figure}

\section{Conclusion}

This paper conducted the first mathematical runtime analyses of MOEAs for multi-valued decision variables. We extended the classic \oneminmax problem and the SEMO multi-objective algorithm from the binary setting to finite multi-valued domains. We proved an upper bound of  $O(n^2r^2\log n)$ on the expected number of iterations required to compute the entire Pareto front. We also proved a near-tight lower bound of $\Omega(n^2r(r+\log n))$. Both agree with the known $\Theta(n^2 \log n)$ runtime for the binary case. 

These results suggest that existing MOEAs, when equipped with natural extensions of the mutation operator to multi-valued domains, perform well on $r$-valued problems. The moderate increase in runtime as $r$ grows is to be expected, among others, because the size of the Pareto front of our optimization problem grows linearly with~$r$. 

In contrast, the current mathematical analysis techniques appear insufficient to completely understand such multi-valued optimization processes. At present, we can determine the true asymptotic order of magnitude of the runtime only for the variant of the SEMO that accepts only strictly dominating solutions. Here we prove a runtime of $\Theta(n^2 r(r+ \log n))$, showing a linear increase of the runtime with $r$ when $r$ is at most logarithmic in~$n$. Both from our understanding of the two algorithms and our experiments, we conjecture that both algorithms have the same asymptotic runtime on the \goneminmax benchmarks, which then would be $\Theta(n^2 r(r+ \log n))$. Proving such a result formally, however, appears to be challenging and might need considerably new analysis methods. 

\subsection*{Acknowledgments}
This work was supported by Guangdong Basic and Applied Basic Research Foundation (Grant No. 2025A1515011936), National Natural Science Foundation of China (Grant No. 62306086), and Xinjiang Tianshan Innovative Research Team (2025D14009). 
This research benefited from the support of the FMJH Program PGMO.

}

\bibliographystyle{named-abbrv}
\bibliography{ich_master,alles_ea_master,rest}
\newpage
\appendix
\onecolumn
\leftlinenumbers
\setcounter{linenumber}{0}
\theoremstyle{NoParens}
\newtheorem*{theoremNP}{Theorem}
\newtheorem*{lemmaNP}{Lemma}
\newtheorem*{corollaryNP}{Corollary}
\newtheorem*{definitionNP}{Definition}
\section*{Supplementary Material for Paper 2110 - ``First Mathematical Runtime Analyses of Multi-Objective Evolutionary Algorithms for Multi-Valued Decision Variables"}
This document provides the proofs omitted from the main paper due to space limitations. For convenience, all lemmas and theorems are restated using the same numbering as in the main paper, and the line number is reset to 1 at the beginning.
\section{Omitted Proofs}
\begin{lemmaNP}[\ref{lem:pareto_front}]
    The Pareto set of \goneminmax is $S^*=[0..r-1]^n$. The Pareto front is 
    \begin{align*}
        F^*=\{(a,nr-n-a)\}\mid a\in[0..nr-n],
    \end{align*}
    whose size is $nr-n+1$.
\end{lemmaNP}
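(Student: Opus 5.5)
The plan rests on one identity: for every $x \in [0..r-1]^n$ we have $f_1(x)+f_2(x)=\sum_{i=1}^n (r-1) = n(r-1) = nr-n$. Both claims of the lemma, about the Pareto set and about the Pareto front, will be derived from it.

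First I will show that every solution is Pareto optimal. Suppose some $y$ dominated $x$. Then $f_1(y)\ge f_1(x)$ and $f_2(y)\ge f_2(x)$, with at least one of the two inequalities strict. Adding them gives $f_1(y)+f_2(y) > f_1(x)+f_2(x)$. This contradicts the fact that both sums equal $nr-n$. Hence no solution is dominated, and $S^*=[0..r-1]^n$.

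Next I will identify the Pareto front as the image $f([0..r-1]^n)$. By the identity, every objective vector has the form $(a, nr-n-a)$ with $a=f_1(x)$. Each $x_i$ lies in $[0..r-1]$, so $a\in[0..nr-n]$; this gives the inclusion ``$\subseteq$''. For ``$\supseteq$'', I must show that every $a\in[0..n(r-1)]$ is attained. The cleanest construction writes $a=q(r-1)+b$ with $q=\lfloor a/(r-1)\rfloor$ and $b\in[0..r-2]$. If $q<n$, take $x$ to have $q$ entries equal to $r-1$, one entry equal to $b$, and all remaining entries $0$. If $q=n$ (which forces $a=n(r-1)$), take $x=(r-1,\dots,r-1)$. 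In both cases $f_1(x)=a$.

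Finally, the map $a\mapsto(a,nr-n-a)$ is injective, so $|F^*|=nr-n+1$. I do not expect any real obstacle. The only point needing care is the surjectivity construction at the boundary case $a=n(r-1)$, which the separate definition for $q=n$ handles.
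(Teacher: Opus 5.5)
Your proof is correct and takes essentially the same route as the paper: both rest on the identity $f_1(x)+f_2(x)=n(r-1)$ and derive non-domination and the shape of the front from it (you add the two inequalities, whereas the paper case-splits on whether $f_1(y)=f_1(x)$). Your explicit construction attaining every $a\in[0..n(r-1)]$ is a welcome addition, since the paper only asserts that $f_1$ takes all these values.
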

\begin{proof}[Proof of Lemma~\ref{lem:pareto_front}]
    We first show that every solution in $[0..r-1]^n$ is Pareto optimal. Let $x,y\in[0..r-1]^n$. By Definition~\ref{def:gomm}, we have $f_1(x)=\sum_{i=1}^n x_i$ and $ f_2(x)= n(r-1)-f_1(x).$ Hence, $f(x)=(f_1(x),\,n(r-1)-f_1(x))$ lies on the line $f_1+f_2=n(r-1)$. If $f_1(y)\neq f_1(x)$, then one objective strictly increases while the other strictly decreases, implying that $y$ cannot dominate $x$. If $f_1(y)=f_1(x)$, then also $f_2(y)=f_2(x)$ and hence $y\nsucc x$. Therefore, every solution in the search space is non-dominated. The Pareto set equals the whole search space, that is, $S^*=[0..r-1]^n$. Since $f_1(x)=\sum_{i=1}^n x_i$ ranges over all integers in $[0..n(r-1)]$, the Pareto front is $F^*=\{(a,\,n(r-1)-a)\mid a\in[0..n(r-1)]\},$, whose size is $nr-n+1$.
\end{proof}

\begin{lemmaNP}[\ref{lem:glotz_pareto_front}]
    For each $k\in[0..n(r-1)]$, write $q=\lfloor k/(r-1)\rfloor$ and $a=k-q(r-1)$. Let
    \[
    x^{(k)}=
    \begin{cases}
    (\underbrace{r-1,\ldots,r-1}_{q},a,\underbrace{0,\ldots,0}_{n-q-1}),& q<n,\\
    (r-1,\ldots,r-1),& q=n.
    \end{cases}
    \]
    The Pareto set of \glotz is $S^*_{\glotz}=\{x^{(k)}\mid k\in[0..n(r-1)]\}$ and its Pareto front is
    $F^*_{\glotz}=\{(k,n(r-1)-k)\mid k\in[0..n(r-1)]\}$. Hence both have size $n(r-1)+1$.
\end{lemmaNP}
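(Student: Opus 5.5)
The plan has three steps: show that every objective vector $(k,n(r-1)-k)$ is attained by $x^{(k)}$, show that every solution is weakly dominated by one of these, and then check that no other solution attains these vectors. The key structural fact is an upper bound on the sum of the two objectives. Fix $x$ and let $L$ be the number of leading entries equal to $r-1$. Similarly, let $R$ be the number of trailing entries equal to $0$. From Definition~\ref{def:glotz} we get $\glo(x)=L(r-1)+x_{L+1}$, where the second term is absent if $L=n$. Likewise $\gtz(x)=R(r-1)+(r-1-x_{n-R})$, where the second term is absent if $R=n$. I would then prove
\[
\glo(x)+\gtz(x)\le n(r-1)
\]
for every $x$, and characterize the equality cases.

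For the inequality, I would split into cases. If $L=n$ then $R=0$ and $\gtz(x)=0$, so the sum is $n(r-1)$. The case $R=n$ is symmetric. Otherwise $L,R<n$, and the prefix and the suffix are disjoint, since an entry cannot be both $r-1$ and $0$ when $r\ge 2$. So $L+R\le n$. If $L+R\le n-2$, the two "active" positions $L+1$ and $n-R$ are distinct. Each contributes at most $r-1$, so the sum is at most $(L+R+2)(r-1)\le n(r-1)$. Equality forces $L+R=n-2$, $x_{L+1}=r-1$ and $x_{n-R}=0$, which contradicts the maximality of $L$. Hence the inequality is strict in this case. If $L+R=n-1$, both active positions coincide at $j=L+1=n-R$, and the sum is $(n-1)(r-1)+x_j+(r-1-x_j)=n(r-1)$. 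In this case $x$ has the form $(r-1,\dots,r-1,a,0,\dots,0)$. The case $L+R=n$ cannot occur when $L,R<n$, because then position $L+1$ would lie in the suffix, i.e.\ $x_{L+1}=0$, and similarly $x_{n-R}=r-1$, both active terms would vanish, and the sum would be $n(r-1)$ with $x=(r-1)^L0^R$. This is again of the claimed form with $a=0$ or $a=r-1$.

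Consequently the solutions with sum exactly $n(r-1)$ are precisely the strings $(r-1)^q\,a\,0^{n-q-1}$ with $a\in[0..r-1]$, together with $(r-1)^n$. Direct evaluation gives $f(x^{(k)})=(k,n(r-1)-k)$. One subtlety is that the representation with $a=r-1$ coincides with $x^{(k)}$ for the next $q$, and $a=0$ coincides with the previous one. So this set is exactly $\{x^{(k)}\}$, with one string per $k$. Distinct $k$ give distinct first objectives, so these solutions are pairwise non-dominated, and the sets have size $n(r-1)+1$.

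It remains to show that every $x$ outside this set is dominated. Such an $x$ has $\glo(x)+\gtz(x)<n(r-1)$. Set $k=\glo(x)\in[0..n(r-1)]$. Then $x^{(k)}$ has the same first objective and second objective $n(r-1)-k>\gtz(x)$, so $x^{(k)}\succ x$. Conversely, no $x^{(k)}$ is dominated, since any dominator would need an objective sum exceeding $n(r-1)$. This yields both the Pareto set and the Pareto front. I expect the main obstacle to be the careful case analysis of how the prefix and suffix overlap, especially the boundary cases $L=n$, $R=n$, and $L+R\in\{n-1,n\}$, together with the identification of the duplicate representations.
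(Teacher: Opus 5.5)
Your proof is correct and follows essentially the same route as the paper. Your active positions $L+1$ and $n-R$ are the paper's $p$ and $s$, your cases $L+R\le n-2$, $L+R=n-1$, $L+R=n$ are its cases $p<s$, $p=s$, $p=s+1$, and the final domination argument (compare $x$ with $x^{(k)}$ for $k$ the first objective of $x$) is the same. One wording slip: the case $L+R=n$ with $L,R<n$ does occur (e.g.\ $x=(r-1,0)$ for $n=2$), but the computation in that same sentence handles it correctly, so nothing is missing.
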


\begin{proof}[Proof of Lemma~\ref{lem:glotz_pareto_front}]
    Let $M:=r-1$. For a solution $x\in[0..M]^n$, define $p:=\min\{i\in[1..n]\mid x_i<M\}$ with $p=n+1$ if no such index exists. Similarly, define $s:=\max\{i\in[1..n]\mid x_i>0\}$ with $s=0$ if no such index exists. By the definition of \glo, if $p\le n$, then $\glo(x)=(p-1)M+x_p$,
    and if $p=n+1$, then $\glo(x)=nM$. Analogously, if $s\ge1$, then $\gtz(x)=(n-s)M+(M-x_s)$, and if $s=0$, then $\gtz(x)=nM$.
    
    We first prove that every solution satisfies $\glo(x)+\gtz(x)\le nM$. The boundary cases are immediate. if $p=n+1$, then $x=(M,\ldots,M)$ and $\glotz(x)=(nM,0)$; if $s=0$, then $x=(0,\ldots,0)$ and $\glotz(x)=(0,nM)$. It remains to consider the case $p\le n$ and $s\ge1$. If $p<s$, then
    \begin{align*}
        \glo(x)+\gtz(x)
        &=(p-1)M+x_p+(n-s)M+(M-x_s)\\
        &=nM-\left((s-p)M+x_s-x_p\right).
    \end{align*}
    Since $s>p$, $x_p\le M-1$, and $x_s\ge1$, the subtracted term is positive. Hence the sum is strictly smaller than $nM$. If $p=s$, then all entries before $p$ are equal to $M$, all entries after $p$ are equal to $0$, and $x_p\in[1..M-1]$. In this case $\glo(x)+\gtz(x)=nM$. Finally, if $p>s$, then no index can lie strictly between $s$ and $p$ with value different from both $0$ and $M$. Since all positions before $p$ are equal to $M$ and all positions after $s$ are equal to $0$, we have $p=s+1$. Hence $x$ consists of a prefix of $M$'s followed by a suffix of $0$'s. Again, $\glo(x)+\gtz(x)=nM$.
    
    The above discussion also characterizes the equality case. Equality holds exactly for solutions consisting of a prefix of $M$'s, followed by at most one intermediate position, followed by a suffix of $0$'s. Equivalently, these solutions are of the form
    \[
    (\underbrace{M,\ldots,M}_{q\text{ times}},a,\underbrace{0,\ldots,0}_{n-q-1\text{ times}})
    \]
    with $q<n$ and $a\in[0..M]$, together with the all-$M$ solution. These are precisely the points $x^{(k)}$ stated in the lemma, where $k=qM+a$. For such a point $x^{(k)}$, we have directly
    \[
    \glo(x^{(k)})=qM+a=k
    \]
    and
    \[
    \gtz(x^{(k)})=(n-q-1)M+(M-a)=nM-k
    \]
    when $q<n$; the case $q=n$ gives $(nM,0)$ as well. Hence all stated solutions have objective vectors $(k,nM-k), k\in[0..nM]$.
    
    It remains to prove that these and only these solutions are Pareto optimal. Since every solution satisfies $\glo(x)+\gtz(x)\le nM$, no solution with objective vector $(k,nM-k)$ can be dominated: increasing one objective beyond its value would force the other objective below its value. Hence all $x^{(k)}$ are Pareto optimal. Conversely, if $x$ is not one of these solutions, then $\glo(x)+\gtz(x)<nM$.
    Let $u:=\glo(x)$. The front point $(u,nM-u)$ weakly dominates $(\glo(x),\gtz(x))$ and is strictly better in the second objective. Hence $x$ is not Pareto optimal. Therefore, the Pareto set is exactly
    \[
    S^*_{\glotz}=\{x^{(k)}\mid k\in[0..n(r-1)]\},
    \]
    and the Pareto front is exactly
    \[
    F^*_{\glotz}=\{(k,n(r-1)-k)\mid k\in[0..n(r-1)]\}.
    \]
    Both sets contain one element for each integer $k\in[0..n(r-1)]$, and thus both have size $n(r-1)+1$.
\end{proof}

\begin{lemmaNP}[\ref{lem:population_size}]
    Let $P \subseteq [0..r-1]^n$ be any set of solutions that none weakly dominates the other w.r.t. \goneminmax. 
    Then $|P| \le nr-n+1$.
\end{lemmaNP}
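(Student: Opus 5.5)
The plan is to map each solution in $P$ to its first objective value and show that this map is injective into a set of size $nr-n+1$. Concretely, consider $\varphi: P \to [0..n(r-1)]$, $\varphi(x) := f_1(x) = \sum_{i=1}^n x_i$. Since every entry lies in $[0..r-1]$, the value $f_1(x)$ is an integer between $0$ and $n(r-1)$. So the codomain has exactly $n(r-1)+1 = nr-n+1$ elements, as already noted in Lemma~\ref{lem:pareto_front}.

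The key step is injectivity. Suppose $x \neq y$ in $P$ satisfy $f_1(x) = f_1(y)$. As observed in the proof of Lemma~\ref{lem:pareto_front}, we have $f_2(z) = n(r-1) - f_1(z)$ for every $z$, so also $f_2(x) = f_2(y)$. Hence $f(x) = f(y)$, and in particular $x \succeq y$, meaning $x$ weakly dominates $y$. This contradicts the assumption that no element of $P$ weakly dominates another. Therefore $\varphi$ is injective, and $|P| \le nr-n+1$.

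There is no real obstacle here. The only point to state carefully is that "weakly dominates" includes equal objective vectors, which is exactly what rules out two distinct solutions with the same $f_1$-value. The same argument also shows that any population of Algorithm~\ref{alg:semo} has at most one individual per objective vector, which is the form in which the lemma will be used.
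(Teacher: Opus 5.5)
Your proposal is correct and follows essentially the same route as the paper: the paper notes that the objective vectors of $P$ lie in the Pareto front $F^*$ of size $nr-n+1$ (Lemma~\ref{lem:pareto_front}) and that $P$ contains at most one solution per objective vector. Your explicit injectivity of $f_1$, using $f_2 = n(r-1)-f_1$, is the same argument spelled out in more detail.
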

\begin{proof}[Proof of Lemma~\ref{lem:population_size}]
    From Lemma~\ref{lem:pareto_front}, we know that every solution $\in [0..r-1]^n$ is Pareto optimal for \goneminmax. Hence, the objective vectors corresponding to solutions in $P$ form a subset of the Pareto front. Since $P$ contains at most one solution for each objective vector, its size is bounded by the size of the Pareto front, that is, $nr-n+1$.
\end{proof}

\begin{theoremNP}[\ref{the:upper_bound_SEMO}]
    Consider Algorithm~\ref{alg:semo} optimizing \goneminmax. Then the expected number of iterations until the population covers the whole Pareto front is $O(n^2r^2\log n)$.
\end{theoremNP}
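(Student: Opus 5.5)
The plan is a fitness-level style argument over the objective values present in the population. By Lemma~\ref{lem:pareto_front} every solution is Pareto optimal, so once a value $f_1=a$ is represented it stays represented forever: a new offspring with the same $f_1$ value only replaces the old one. Hence the set $V_t:=\{f_1(z)\mid z\in P_t\}$ is non-decreasing, and by Lemma~\ref{lem:population_size} the population size is always at most $n(r-1)+1$. Since every offspring differs from its parent by exactly one in $f_1$ (or is the parent itself), $V_t$ is always an integer interval $[\ell_t..u_t]$. Covering the front means $\ell_t=0$ and $u_t=n(r-1)$. By symmetry ($x\mapsto (r-1)-x$ coordinatewise swaps $f_1$ and $f_2$), it suffices to bound the expected time until $u_t=n(r-1)$ by $O(n^2r^2\log n)$. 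The time for $\ell_t$ to reach $0$ has the same bound, and the sum of the two is still $O(n^2r^2\log n)$.

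Consider the individual $x$ with $f_1(x)=u_t$, and let $k:=n(r-1)-u_t$ denote the remaining distance to the upper border. The key counting step is to bound the number of entries of $x$ with $x_i<r-1$, since these are exactly the coordinates that can be increased. Each such entry contributes at most $r-1$ to the deficit $k$, so there are at least $\lceil k/(r-1)\rceil$ of them. The improvement probability in one iteration is then at least
\[
\frac{1}{|P_t|}\cdot\frac{\lceil k/(r-1)\rceil}{n}\cdot\frac12\;\ge\;\frac{k}{2n(r-1)(n(r-1)+1)}.
\]
Such an offspring has $f_1=u_t+1$, which is not yet represented, so it is accepted and $u$ increases by one. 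The individual at level $u_t$ may be replaced over time, but the bound holds for whichever individual currently sits there, so the waiting time at each level $k$ is dominated by a geometric variable with the success probability above.

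Summing the expected waiting times over $k=1,\dots,n(r-1)$ gives $\sum_k 2n(r-1)(n(r-1)+1)/k=O(n^2r^2\log(nr))$. The main obstacle is that this yields $\log(nr)$ rather than $\log n$, which matters if $r$ is superpolynomial in $n$. To fix this, I would split the levels. For $k\ge r-1$ I use the bound $\lceil k/(r-1)\rceil\ge\lfloor k/(r-1)\rfloor$ directly: grouping the $k$ into blocks of $r-1$ consecutive values sharing the same $j=\lfloor k/(r-1)\rfloor$, block $j$ costs $(r-1)\cdot 2n\cdot(n(r-1)+1)/j$, and summing over $j\le n$ gives $O(n^2r^2\log n)$. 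For $k<r-1$ at least one entry can still be increased, with probability at least $1/(2n(n(r-1)+1))$ per iteration, so these $r-1$ levels cost $O(n^2r^2)$ in total. Together these give $O(n^2r^2\log n)$ for reaching $u=n(r-1)$. The same bound for $\ell$ follows by symmetry, so the expected time until the population covers the whole Pareto front is $O(n^2r^2\log n)$, which proves Theorem~\ref{the:upper_bound_SEMO}.
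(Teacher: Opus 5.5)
Your proof is correct and follows the paper's argument: a fitness-level bound on the maximal $f_1$-value (and symmetrically the maximal $f_2$-value), using parent-selection probability at least $1/(n(r-1)+1)$ and at least $\lceil k/(r-1)\rceil = n-\lfloor i/(r-1)\rfloor$ increasable entries. The paper avoids your $\log(nr)$ detour and the case split by noting directly that $\lceil k/(r-1)\rceil$ takes each value in $[1..n]$ exactly $r-1$ times as $k$ runs over $[1..n(r-1)]$, which gives at most $2n(r-1)(n(r-1)+1)\sum_{j=1}^{n} 1/j$ per direction.
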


\begin{proof}[Proof of Theorem~\ref{the:upper_bound_SEMO}]
    Let $i$ denote the maximum $f_1$-value reached by the population, that is, $i := \max\{f_1(x)\mid x\in P_t\}.$ We first estimate the waiting time to increase this value. For the unit-strength local mutation, an offspring with $f_1=i+1$ can only be generated from an individual $x\in P_t$ with $f_1(x)=i$. From Lemma~\ref{lem:population_size}, we know that $|P_t|\leq nr-n+1$ and hence the probability to select one desired individual is at least $1/(nr-n+1)$. For any individual $x$ with $f_1(x)=i$, at most $\lfloor i/(r-1)\rfloor$ of its entries can have the value $r-1$. Consequently, at least $n-\lfloor i/(r-1)\rfloor$ entries have values strictly smaller than $r-1$. 
    Choosing one of these entries (with probability at least $(n-\lfloor i/(r-1)\rfloor)/n$) and choosing the $+1$ step (with probability $1/2$) yield an offspring $y$ with $f_1(y)=i+1$. Such an offspring is not dominated by any individual in $P_t$ and therefore enters into the next population. Let $p_i$ denote the probability to increase the current maximum $f_1$-value from $i$ to $i+1$ in one iteration. Then we have $p_i \ge \frac{1}{2}\cdot\frac{1}{nr-n+1}\cdot \frac{n-\lfloor i/(r-1)\rfloor}{n}. $
    
    Analogously, let $ j := \max\{f_2(x)\mid x\in P_t\}$ be the current maximum $f_2$-value in the population. An offspring with $f_2=j+1$ can only be generated from an individual $x\in P_t$ with $f_2(x)=j$, which is selected with probability at least $1/(nr-n+1)$. 
    For any such $x$, we have $\sum_{k=1}^n x_k = n(r-1)-j$, implying that at least $\lceil (n(r-1)-j)/(r-1)\rceil = \lceil n-j/(r-1)\rceil$ entries of $x$ have values strictly larger than $0$. 
    The probability to select one of these entries and then choose the $-1$ step is at least $\lceil n-j/(r-1)\rceil/(2n)$. The resulting offspring increases the $f_2$-value by one and will be accepted. Denoting by $q_j$ the probability that the current maximum $f_2$-value increases from $j$ to $j+1$ in one iteration, we obtain $ q_j \ge \frac{1}{nr-n+1}\cdot \frac{\lceil n-j/(r-1)\rceil}{2n}.$
    
    Since any reached Pareto front point will be maintained in all future generations, we know that the maximum $f_1$-value and the maximum $f_2$-value reached by $P_t$ will not decrease. Let $z$ be the initially generated individual. Then the expected number of iterations until the population covers the whole Pareto front is at most  
    \begin{align*}
        &\sum_{i=f_1(z)}^{nr-n-1} \frac{1}{p_i} + \sum_{j=f_2(z)}^{nr-n-1} \frac{1}{q_j}\\
        &\leq \sum_{i=f_1(z)}^{nr-n-1} \frac{2(nr-n+1)n}{n-\lfloor i/(r-1) \rfloor} + \sum_{j=f_2(z)}^{nr-n-1} \frac{2(nr-n+1)n}{\lceil  n-j/(r-1)  \rceil}\\
        &< \sum_{i=0}^{nr-n-1} \frac{2(nr-n+1) n}{n-\lfloor i/(r-1) \rfloor}+ \sum_{j=0}^{nr-n-1} \frac{2(nr-n+1)n}{\lceil  n-j/(r-1)  \rceil}\\
        &=4(nr-n+1)n(r-1)\sum_{i=1}^n \frac{1}{i} =O(n^2r^2\log n),
    \end{align*} 
    where the first equality follows from aggregating the equal terms: As $i$ and $j$ ranges from $0$ to $nr-n+1$, the values of $n-\lfloor i/(r-1) \rfloor$ and $\lceil  n-j/(r-1)  \rceil$ each take every value in $[1..n]$ exactly $r-1$ times.
\end{proof}

\begin{lemmaNP}[\ref{lem:low_SEMO_1}]
    Consider Algorithm~\ref{alg:modified_semo} optimizing \goneminmax. Then with probability $1-\exp(-\Theta(n))$, the population contains at least $n(r-1)/130$ individuals after at most $8n^2(r-1)^2/129$ iterations.
\end{lemmaNP}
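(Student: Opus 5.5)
We analyze the delayed SEMO (Algorithm~\ref{alg:modified_semo}) through its population size $|P_t|$, which equals the number of distinct $f_1$-values present. By Lemma~\ref{lem:pareto_front}, every solution is Pareto optimal, so a covered $f_1$-value is never lost; hence $|P_t|$ is non-decreasing. We call an iteration a \emph{success} if it creates an offspring with a new $f_1$-value. Such an offspring is always accepted, because nothing dominates it, so each success raises $|P_t|$ by one. The plan has three steps: show that while $|P_t|<n(r-1)/130$ every iteration succeeds with probability at least some $p=\Theta(1/(n(r-1)))$ independently of the past, dominate the number of successes by a binomial variable, and apply a Chernoff bound.

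\textbf{Step 1 (a good boundary individual).} Suppose $|P_t|=k<n(r-1)/130$. The covered $f_1$-values form a set of size $k$. Consider the maximal covered value $a=\max_{x\in P_t}f_1(x)$ and the minimal covered value $b$.
\begin{itemize}
\item If $a\le n(r-1)/2$, we use the individual $x$ with $f_1(x)=a$. Value $a+1$ is uncovered by maximality. At most $a/(r-1)\le n/2$ entries of $x$ equal $r-1$, so at least $n/2$ entries can be increased.
\item Otherwise $a>n(r-1)/2$. If $b<n(r-1)/2$, we use the individual $y$ with $f_1(y)=b$. Value $b-1$ is uncovered by minimality. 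Since $f_2(y)>n(r-1)/2$, at least $n/2$ entries of $y$ are positive and can be decreased.
\item If instead $b\ge n(r-1)/2$, we again use $x$ with $f_1(x)=a$. Now $a<b+k\le n(r-1)/2+n(r-1)/130$, so many entries of $x$ are still below $r-1$.
\end{itemize}
In every case, with the exact selection probability $1/(n(r-1)+1)$ of the delayed SEMO, we obtain a success with probability at least about $\frac{1}{n(r-1)+1}\cdot\frac{1}{2}\cdot c$ for a constant $c$ near $1/2$. The concrete constants $129$ and $130$ should come out of tracking these fractions carefully. I expect this bookkeeping, together with ensuring the bound uses only the current state (not the history), to be the main obstacle.

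\textbf{Step 2 (coupling).} Since each iteration succeeds with probability at least $p$ conditionally on any history, as long as $|P_t|<n(r-1)/130$, the number of successes in $T=8n^2(r-1)^2/129$ iterations stochastically dominates $\mathrm{Bin}(T,p)$. We stop the process once the target size is reached, using a standard coupling with independent Bernoulli trials.

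\textbf{Step 3 (Chernoff).} The mean $Tp$ is a constant multiple of $n(r-1)$ that exceeds $n(r-1)/130$ by a constant factor; for example, $Tp\approx 2n(r-1)/129$. Applying Theorem~\ref{the:chernoff} with $r-1$ replaced by $1$ (Bernoulli summands) and a constant $\delta$, the probability of fewer than $n(r-1)/130$ successes is at most $\exp(-\Theta(n(r-1)))\le\exp(-\Theta(n))$. On the complementary event, the population has reached size at least $n(r-1)/130$ within $T$ iterations, which proves the claim.
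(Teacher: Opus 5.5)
Your Steps 2 and 3 are fine. Dominating the number of successes by $\mathrm{Bin}(T,p)$ and applying Chernoff is a valid substitute for the paper's domination by a sum of geometric variables plus Witt's tail bound, and $Tp$ is indeed about twice $n(r-1)/130$. The gap is in Step 1, the per-iteration success bound on which everything rests. You need the chosen boundary individual to have $\Omega(n)$ entries movable in the required direction. Increasing $f_1$ needs entries below $r-1$, so $f_1$ must not be too large; decreasing $f_1$ needs positive entries, so $f_1$ must not be too small.

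Case (ii) gets this backwards. $f_2(y)>n(r-1)/2$ only says that more than $n/2$ entries of $y$ are below $r-1$. The number of positive entries of $y$ is only bounded below by $\lceil b/(r-1)\rceil$, and nothing in your argument keeps $b$ away from $0$: for $y=0^n$ there is no positive entry, and $b-1$ is not even a valid value.

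Case (iii) fails outright. The chain $a<b+k\le n(r-1)/2+n(r-1)/130$ uses $b\le n(r-1)/2$, which is the opposite of the case hypothesis. Moreover, $a<b+k$ presupposes that the covered values are contiguous, which you never argue. If the covered values all lie near $n(r-1)$ (in the extreme, $P_t=\{(r-1)^n\}$), the individual you pick has few or no entries below $r-1$. So the uniform bound ``conditionally on any history'' that Step 2 requires is not established.

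The fix is to match the mutation direction to the individual's position. With contiguity this is short. The covered $f_1$-values always form an integer interval $[b..a]$ with $a-b=k-1$: each offspring's $f_1$-value is within one of its parent's, and no value is ever lost, since every solution is Pareto optimal and an accepted offspring only replaces the individual with the same objective vector.
\begin{itemize}
\item If $a\le n(r-1)/2$, increase from $a$ as in your case (i).
\item Otherwise $b>n(r-1)/2-n(r-1)/130$, so the individual at $b$ has at least $64n/130$ positive entries, and decreasing one of them fills the uncovered value $b-1$.
\end{itemize}

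The paper avoids contiguity altogether. If some individual has $f_1\le\lfloor 129n(r-1)/260\rfloor$, it walks upward from it to the first uncovered value, which is at most $n(r-1)/130$ steps away. It then increases the individual just below that gap; this individual has $f_1<131n(r-1)/260$ and hence at least $129n/260$ entries below $r-1$. Otherwise it decreases the minimum, which has at least $129n/260$ positive entries.

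Either way you get $p\ge c/(n(r-1)+1)$, with $c=16/65$ or $c=129/520$ respectively. Your Steps 2 and 3 then go through with $T=8n^2(r-1)^2/129$.
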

\begin{proof}[Proof of Lemma~\ref{lem:low_SEMO_1}]
    Consider any time $t$ with $|P_t|<n(r-1)/130$. Let $A_t$ denote the set of $f_1$-values currently contained in the population, that is, $A_t:=\{f_1(z)\mid z\in P_t\}\subseteq [0..n(r-1)].$ We first show that, regardless of the current state of the population, there exists an $i\in A_t$ such that either $i+1\notin A_t$ or $i-1\notin A_t$. Then we calculate the probability to generate a solution such that its $f_1$-value equals to this missing value. 
    
    We consider the following two cases. For the first case, assume that there is an individual $x'\in P_t$ with $f_1(x')\le \lfloor 129n(r-1)/260\rfloor$. Then among the values $f_1(x'), f_1(x')+1,\dots,f_1(x')+\lfloor n(r-1)/130\rfloor$, there exists an $m\in[0..n(r-1)/130]$ such that $f_1(x')+m\notin A_t$. Let $m$ be minimal with this property and define $i:=f_1(x')+m-1$. We know that $i\in A_t$ but $i+1\notin A_t$. Let $x\in P_t$ be such an individual with $f_1(x)=i$. Then we have that $f_1(x)=i < \frac{n(r-1)}{130}+\Big\lfloor\frac{129n(r-1)}{260}\Big\rfloor,$ and no $y\in P_t$ with $f_1(y)=f_1(x)+1.$
    
    For the other case, all individuals $z\in P_t$ satisfy $f_1(z)>\lfloor 129n(r-1)/260\rfloor$. Let $i:=\min A_t$ and take the solution $x\in P_t$ with $f_1(x)=i$. We have $i-1\notin A_t$, i.e., there is no individual $y\in P_t$ with $f_1(y)=f_1(x)-1$.
    
    In both cases we obtained an individual $x\in P_t$ such that $f_1(x)+1$ or $f_1(x)-1$ is not represented in the population. We now bound from below the probability to generate an offspring in one iteration of Algorithm~\ref{alg:modified_semo} whose fitness fills this missing value. First, the modified parent selection chooses a value in $[0..n(r-1)]$ uniformly at random. Hence, the probability that it selects $f_1(x)$ is $1/(n(r-1)+1)$, and then $x$ is chosen as parent. 
    Second, in both cases we have $f_1(x)>\lfloor 129n(r-1)/260\rfloor-1$, which implies that at least $ \Big\lceil \frac{129n(r-1)/260}{r-1}\Big\rceil \ge \frac{129n}{260} $ entries is not at the relevant boundary and thus allow a unit-step change in the required direction while keeping all other entries unchanged. The probability to choose one of these entries is at least $(129n/260)/n$, and the probability to choose the correct $\pm 1$ direction is $1/2$. Therefore, in each iteration, the probability to create a missing neighboring $f_1$-value is at least
    \[
    p:=\frac{1}{n(r-1)+1}\cdot \frac{129}{260}\cdot \frac{1}{2}
    =\frac{129}{520\,(n(r-1)+1)}.
    \]

    Each successful creation of a previously missing neighboring value increases the population size by one.
    Consequently, $T$ is stochastically dominated by the sum of $\lceil n(r-1)/130-1\rceil$ independent geometric random variables $V_1,\dots,V_{\lceil n(r-1)/130\rceil-1}$ with success probability~$p$. Let $V:=\sum_{i=1}^{\lceil n(r-1)/130\rceil-1}V_i$. Then
    \begin{align*}
        \mathbb{E}[V]
        &=\bigl(\lceil n(r-1)/130\rceil-1\bigr)\cdot \tfrac{1}{p}\\
        &=\bigl(\lceil n(r-1)/130\rceil-1\bigr)\cdot \frac{520(n(r-1)+1)}{129}\\
        &\le \frac{4n(r-1)}{129}\cdot (n(r-1)+1)
        \le \frac{4.84n^2(r-1)^2}{129}.
    \end{align*}

    With Theorem~\ref{the:witt14},  we obtain that for $\lambda:=3.16n^2(r-1)^2/129$ and $s:=(\lceil n(r-1)/130\rceil-1)/p^2$, we have
    \begin{align*}
        \Pr[T\geq 8n^2(r-1)^2/129]&\leq \Pr[V\geq8n^2(r-1)^2/129]\\
        &\leq \Pr[V\geq \mathbb{E}(V)+3.16n^2(r-1)^2/129]\\
        &\leq \exp({-\tfrac{1}{4}\min\{\tfrac{\lambda^2}{s},\lambda p\}}) =\exp(-\Theta(n)). \qedhere
    \end{align*}
\end{proof}

\begin{lemmaNP}[\ref{lem:low_SEMO_2}]
    Consider Algorithm~\ref{alg:modified_semo} optimizing \goneminmax. Then with probability at least $1-\exp (-\Theta(n))$, in all iterations $t \in [0..n^2(r-1)^2/16]$, the distance of the population to the Pareto borders is at least $n(r-1)/8$.
\end{lemmaNP}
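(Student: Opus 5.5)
We split the proof into an initialization part and a progress part, treating the two borders symmetrically.

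\textbf{Initialization.} Let $z$ be the random initial individual. Then $f_1(z)=\sum_i z_i$ is a sum of $n$ independent variables in $[0,r-1]$ with $\mathbb{E}[f_1(z)]=n(r-1)/2$. Applying the Chernoff bound of Theorem~\ref{the:chernoff} with $\delta=1/2$ gives
\[
\Pr[f_1(z)\le n(r-1)/4]\le \exp(-n/16).
\]
Since $f_2(z)=\sum_i (r-1-z_i)$ has the same distribution, the same bound holds for $f_2(z)$. A union bound shows that, with probability $1-\exp(-\Theta(n))$, we have $d_{PF}(P_0)\ge n(r-1)/4$.

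\textbf{Progress towards the borders.} Consider $m_t:=\min_{z\in P_t}f_1(z)$; the quantity $\min_{z\in P_t} f_2(z)$ is handled symmetrically. Under unit-strength local mutation, an offspring changes $f_1$ by at most one. Also, every individual is Pareto optimal and individuals with existing $f_1$-values are only replaced by equal-valued ones. Hence $m_t$ never increases by selection effects except through removals, which cannot lower it, and it decreases by at most one per iteration. A decrease requires that the delayed parent selection picks exactly the value $m_t$, which happens with probability $1/(n(r-1)+1)$. It further requires choosing the $-1$ direction, which happens with probability at most $1/2$. So in each iteration, independently of the history, $m_t$ decreases by one with probability at most
\[
q:=\frac{1}{2(n(r-1)+1)}.
\]
Thus the number of decreases within $T:=n^2(r-1)^2/16$ iterations is stochastically dominated by a binomial variable $B\sim\mathrm{Bin}(T,q)$. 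Its mean is $\mathbb{E}[B]\le n(r-1)/32$.

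To reach $m_t< n(r-1)/8$ from $m_0\ge n(r-1)/4$, we need at least $n(r-1)/8$ decreases, which is four times the mean. The Chernoff upper tail then gives probability at most $\exp(-\Omega(n(r-1)))\le \exp(-\Omega(n))$. Alternatively, we can view the waiting times between decreases as geometric variables with success probability $q$ and apply the lower-tail bound of Theorem~\ref{the:witt14}: $n(r-1)/8$ such geometric variables have expected sum $n(r-1)/(8q)\approx n^2(r-1)^2/4$, which is far more than $T$. Combining both objectives and the initialization event with a union bound gives the claim for all $t\in[0..T]$ simultaneously, since $m_t$ is monotone in the count of decreases.

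\textbf{Main obstacle.} The delicate step is justifying the domination rigorously. The probability bound $q$ must hold conditionally on any history, so we formalize it by coupling each iteration with an independent Bernoulli$(q)$ coin that dominates the indicator of a decrease of $m_t$. We must also check that no other event (e.g., replacement of the minimum individual) can lower $m_t$. This holds because removals only delete individuals whose $f_1$-value equals that of the accepted offspring, so the set of attained $f_1$-values only grows by the new offspring value.
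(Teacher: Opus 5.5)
Your proof is correct and follows essentially the same route as the paper. You use Chernoff with $\delta=1/2$ for the initial individual, then observe that under the delayed SEMO the minimum $f_1$-value (and symmetrically the minimum $f_2$-value) can drop only if the unique individual attaining it is selected, which happens with probability $1/(n(r-1)+1)$, and finish with a concentration bound. The only difference is dual: you count decreases in a fixed window via a dominating $\mathrm{Bin}(T,q)$ and a Chernoff upper tail, whereas the paper sums geometric waiting times and applies the lower-tail bound of Theorem~\ref{the:witt14}. Your version needs only the upper bound on the per-iteration decrease probability; the paper's extra lower bound $p_i\ge 1/(8n(r-1)+8)$ is superfluous.
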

\begin{proof}[Proof of Lemma~\ref{lem:low_SEMO_2}]
    For each iteration $t$, define $X_t^\leftarrow := \min_{z\in P_t} f_1(z)$ and $X_t^\rightarrow := \min_{z\in P_t} f_2(z)$. Let $d_{PF}(P_t) =\min\{X_t^\leftarrow,X_t^\rightarrow\}$ denote the distance of the population to the Pareto borders. 
    Since the arguments for $X_t^\leftarrow$ and $X_t^\rightarrow$ are symmetric, we only analyze $X_t^\leftarrow$ here. 
    
    We first estimate the probability that $X_0^\leftarrow \geq n(r-1)/4$. Let $x$ be the random initial individual. We have $f_1(x)=\sum_{i=1}^n x_i$ and hence $E[f_1(x)]=\frac{n(r-1)}{2}$. Applying the Chernoff bound in Theorem~\ref{the:chernoff} with $\delta=1/2$, we obtain
    \[
    \Pr[f_1(x)\ge n(r-1)/4] \ge 1-\exp\left(-\tfrac{n}{16}\right).
    \]
    For the rest of the proof, we condition on this event and thus assume $X_0^\leftarrow\ge n(r-1)/4$.

    Due to unit-strength local mutation, the value $X_t^\leftarrow$ can decrease by at most one in a single iteration. We consider the process until $X_t^\leftarrow$ drops below $\lceil n(r-1)/8\rceil$. For $i\in[\lceil n(r-1)/8\rceil..\lceil n(r-1)/4\rceil]$, let $p_i$ denote the probability to decrease the value of $X_t^\leftarrow$ from $i$ to $i-1$ per iteration. Using Algorithm~\ref{alg:modified_semo}, the probability to select the desired individual whose $f_1$-value equals to $i$ is exactly $1/(n(r-1)+1)$. Hence, we know that $ p_i \le \frac{1}{n(r-1)+1}.$
    Since $i\ge n(r-1)/8$, at least $\lceil i/(r-1)\rceil \ge n/8$ entries are strictly positive, implying the lower bound
    \[
    p_i \ge \frac{1}{n(r-1)+1}\cdot \frac{n/8}{n}
    = \frac{1}{8n(r-1)+8}.
    \]
    Hence, the waiting time spent on each value $i$ is stochastically dominated by a geometric random variable with success probability $p_i\in[\frac{1}{8n(r-1)+8}, \frac{1}{n(r-1)+1}]$. Let $T^\star$ denote the total time until $X_t^\leftarrow < \lceil n(r-1)/8\rceil$. We know that 
    \begin{align*}
        \mathbb{E}[T^\star]
        &\ge \sum_{i=\lceil n(r-1)/8\rceil}^{\lceil n(r-1)/4\rceil}
        \frac{1}{p_i} \ge
        \sum_{i=\lceil n(r-1)/8\rceil}^{\lceil n(r-1)/4\rceil}
        \bigl(n(r-1)+1\bigr)\\
        &\ge \frac{n(r-1)}{8}\cdot \bigl(n(r-1)+1\bigr) \ge \frac{n^2(r-1)^2}{8}.
    \end{align*}

    Let $s:=\sum_i 1/p_i^2$. Using the above bounds on $p_i$, we obtain
    \[
    s \le \bigg(\frac{n(r-1)}{8}+2\bigg)\cdot \bigl(8n(r-1)+8\bigr)^2
    \le 8\bigl(n(r-1)+16\bigr)^3.
    \]
    Applying Theorem~\ref{the:witt14} with
    $\lambda:=n^2(r-1)^2/16\le \mathbb{E}[T^\star]/2$ yields
    \[
    \Pr [T^\star\le \lambda]
    \le \exp\left(-\frac{\lambda^2}{2s}\right)
    \le \exp\left(-\Theta(n)\right).
    \]

    Combining this bound with the failure probability of the initialization step, we conclude that with probability at least $1-\exp(-\Theta(n))$, the value $X_t^\leftarrow$ remains at least $n(r-1)/8$ for all $t\in[0..n^2(r-1)^2/16]$. The same argument applies to $X_t^\rightarrow$.  
\end{proof}

\begin{lemmaNP}[\ref{lem:low_SEMO_3}]
    Consider Algorithm~\ref{alg:semo} optimizing \goneminmax, starting with a population size of at least $\frac{n(r-1)}{130}$ and the distance to the Pareto borders of at least $n(r-1)/8$. Then, the expected number of iterations until the population covers the whole Pareto front is $\Omega(n^2r(r+\log n))$.
\end{lemmaNP}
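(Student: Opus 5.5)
The plan is to track the progress of the population towards the Pareto border $(0,n(r-1))$ and lower-bound the waiting time there. Covering the whole Pareto front requires, in particular, an individual with $f_1$-value $0$.

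Let $X_t:=\min_{z\in P_t} f_1(z)$. By assumption, $X_0\ge n(r-1)/8$. Two structural facts drive the argument.

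First, the population size never decreases. By Lemma~\ref{lem:pareto_front} every solution is Pareto optimal. An offspring therefore only removes an individual with the same objective vector, and every $f_1$-value once present stays present. Hence $|P_t|\ge n(r-1)/130$ for all $t$.

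Second, $X_t$ decreases by at most one per iteration. Unit-strength local mutation changes $f_1$ by exactly one (or zero if infeasible). So a new minimum $i-1$ can only be created from the unique individual $x\in P_t$ with $f_1(x)=i=X_t$.

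Consequently, the time to reach $X_t=0$ is at least the sum, over all levels $i\in[1..\lfloor n(r-1)/8\rfloor]$, of the time spent at level $i$. Each of these times stochastically dominates a geometric random variable whose success probability is an upper bound $p_i$ on the one-step probability to leave level $i$.

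Next I bound $p_i$ from above. The parent $x$ with $f_1(x)=i$ is selected with probability $1/|P_t|\le 130/(n(r-1))$. The offspring has value $i-1$ only if a strictly positive entry is chosen and the $-1$ direction is taken. Since $\sum_k x_k=i$, at most $\min\{n,i\}$ entries are positive. Hence
\[
p_i\le \frac{130}{n(r-1)}\cdot\frac{\min\{n,i\}}{2n}.
\]

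I then split the levels into two phases.

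In the close-to-border phase, $i\le \min\{n,\lfloor n(r-1)/8\rfloor\}$. Here $1/p_i\ge n^2(r-1)/(65 i)$. Since $\min\{n,n(r-1)/8\}\ge n/8$, summing the harmonic series gives $\Omega(n^2 r\log n)$.

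In the far-from-border phase, $n<i\le n(r-1)/8$. This phase is nonempty only when $r>9$. Each level contributes $1/p_i\ge n(r-1)/65$, and there are $n(r-1)/8-n=\Omega(nr)$ levels once $r\ge 17$. This gives $\Omega(n^2r^2)$.

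For bounded $r$, the term $n^2r^2$ is $O(n^2 r\log n)$, so the first phase alone already suffices. For $r\ge 17$, both contributions are present. Adding the two phases and using $r(r+\log n)=\Theta(\max\{r^2,r\log n\})$ yields $\Omega(n^2r(r+\log n))$.

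The main obstacle is making the level-by-level domination rigorous. The population changes while $X_t$ stays at level $i$: the individual with $f_1=i$ can be replaced by another one with the same value but a different number of positive entries. I would handle this by noting that the bound on $p_i$ holds uniformly over all states with $X_t=i$, since it uses only $|P_t|\ge n(r-1)/130$ and the trivial bound $\min\{n,i\}$ on the number of positive entries. A standard coupling then shows that the time at each level dominates an independent geometric variable with parameter $p_i$, and linearity of expectation finishes the proof.
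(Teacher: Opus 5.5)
Your proposal is correct and follows essentially the paper's proof. You track $\min_{z\in P_t} f_1(z)$, split the levels at $n$, and use the per-level bounds $p_i\le 130/(n(r-1))$ far from the border and $p_i\le 130\,i/(n^2(r-1))$ near it; your extra factor $1/2$ for the direction changes nothing. Your explicit argument that $|P_t|$ never shrinks and your case distinction for small $r$ are details the paper leaves implicit: there the first-phase bound $n^2(r-1)(r-9)/1040$ is vacuous, and $X_0\ge n$ is not guaranteed. So your write-up is, if anything, slightly more careful than the original.
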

\begin{proof}[Proof of Lemma~\ref{lem:low_SEMO_3}]
    As defined before in the proof of Lemma~\ref{lem:low_SEMO_2}, let $d_{PF}$ denote the distance of the population to the Pareto borders. Let $ S:=\inf\{t\in\mathbb{N}\mid d_{PF}(P_t)=0\} $ be the first time when the distance to the Pareto borders becomes zero. Since covering the Pareto front requires reaching both borders, we know that $S$ is stochastically dominated by the runtime of the SEMO covering the entire Pareto front. 
    
    By symmetry, it suffices to analyze the time until $X_t^\leftarrow=0$. Recall that $X_t^\leftarrow := \min_{z\in P_t} f_1(z)$ and $d_{PF}(P_t)=\min\{X_t^\leftarrow, X_t^\rightarrow\}$. By assumption, initially we have that $X_0^\leftarrow \ge n(r-1)/8$. Due to unit-strength local mutation, $X_t^\leftarrow$ can decrease by at most one per iteration. 
    Let $T^\star$ be the first time when $X_{T^\star}^\leftarrow=0$. We decompose the process into two phases $ T^\star = T_1^\star + T_2^\star$,  where $T_1^\star$ denotes the time until $X_t^\leftarrow \le n$ and $T_2^\star$ the remaining time from $X_t^\leftarrow\le n$ until $X_t^\leftarrow=0$. For each $i\ge 1$, let $p_i$ denote an upper bound on the probability that in a given iteration the value $X_t^\leftarrow$ drops from $i$ to $i-1$ (conditioned on $X_t^\leftarrow=i$). Then the waiting time on value $i$ is stochastically bounded from below by a geometric random variable with success probability $p_i$. Consequently, the time needed to go from value $a$ down to value $b$ is then stochastically bounded from below by a sum of geometric random variables with success probabilities $p_a,p_{a-1},\dots,p_{b+1}$. We now estimate $p_i$ in the two phases.

    For the first phase, since $|P_t|\geq (n(r-1))/130$, the probability to select a specific individual is at most $130/(n(r-1))$. Further, once this individual is selected, the probability that $f_1$ decreases by 1 is at most~$1$. Therefore, $p_i \le \frac{130}{n(r-1)} $ for all $ i\in[n+1..n(r-1)/8].$ 
    Recall that $T_1^\star$ denotes the time to decrease $X_t^\leftarrow$ from $\lceil n(r-1)/8\rceil$ down to $n$. Using $p_i\le 130/(n(r-1))$ we obtain
    \begin{align*}
        \mathbb{E}[T_1^\star]
        &\ge \sum_{i=n+1}^{\lceil n(r-1)/8\rceil} \frac{1}{p_i}
        \;\ge\; \sum_{i=n+1}^{\lceil n(r-1)/8\rceil} \frac{n(r-1)}{130}\\
        &\ge \Big(\frac{n(r-1)}{8}-n\Big)\cdot\frac{n(r-1)}{130}
        = \frac{n^2(r-1)(r-9)}{1040}.
    \end{align*}

    When $X_t^\leftarrow=i\le n$, the probability to decrease $X_t^\leftarrow$ in one iteration is at most the probability to select a suitable parent times the probability to flip one of the changeable entries. This gives
    \[
    p_i \le \frac{130}{n(r-1)}\cdot \frac{i}{n}
    = \frac{130\,i}{n^2(r-1)}.
    \]
    Recall that $T_2^\star$ denotes the time to decrease $X_t^\leftarrow$ from $n$ to $0$. Using the upper bound on $p_i$, we obtain
    \begin{align*}
        \mathbb{E}[T_2^\star]
        &\ge \sum_{i=1}^{n} \frac{1}{p_i}
        \ge \sum_{i=1}^{n}\frac{n^2(r-1)}{130i}
        > \frac{n^2(r-1)\ln n}{130}.
    \end{align*}
    
    Combining these two phases, we have 
    \begin{equation*}
        T^\star = T_1^\star + T_2^\star=\Omega\big(n^2(r-1)\,(r+\ln n)\big).  \qedhere
    \end{equation*}
\end{proof}

\begin{theoremNP}[\ref{the:lower_bound_SEMO}]
    Consider Algorithm~\ref{alg:semo} optimizing \goneminmax. Then
    the expected number of iterations until the population covers the whole Pareto front is $\Omega(n^2r(r+\log n))$.
\end{theoremNP}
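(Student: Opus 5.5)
The plan is to combine Lemmas~\ref{lem:low_SEMO_1}, \ref{lem:low_SEMO_2} and~\ref{lem:low_SEMO_3} via a coupling between the delayed SEMO (Algorithm~\ref{alg:modified_semo}) and the original SEMO (Algorithm~\ref{alg:semo}).

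First, run the delayed SEMO for $t_0:=8n^2(r-1)^2/129$ iterations. By Lemma~\ref{lem:low_SEMO_1}, with probability $1-\exp(-\Theta(n))$ the population contains at least $n(r-1)/130$ individuals by time $t_0$. Since the population size never decreases (all solutions are Pareto optimal and each objective value, once present, stays represented), this bound persists. Since $t_0 = 8n^2(r-1)^2/129 \le n^2(r-1)^2/16$, Lemma~\ref{lem:low_SEMO_2} gives that, with probability $1-\exp(-\Theta(n))$, the distance of the population to the Pareto borders is still at least $n(r-1)/8$ at time $t_0$. By a union bound, both events hold simultaneously with probability $1-\exp(-\Theta(n))$, which is at least $1/2$ for $n$ large.

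Second, transfer this to the original SEMO. The delayed SEMO differs from the SEMO only by inserting skipped iterations. Conditioned on a non-skipped iteration, the chosen parent is uniform over $P_t$, exactly as in line~\ref{line:parent_selection} of Algorithm~\ref{alg:semo}. Hence the sequence of populations after non-skipped iterations is distributed as the SEMO's population sequence. In particular, at the stopping time where the delayed process has made $t_0$ steps, the corresponding SEMO state (after at most $t_0$ real iterations) satisfies both structural properties, and the SEMO has not yet covered the front, since the distance to the borders is still positive.

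Third, apply Lemma~\ref{lem:low_SEMO_3} from that state, using the strong Markov property of the SEMO process. Conditioned on the good event, the remaining expected time is $\Omega(n^2r(r+\log n))$. The overall expectation is therefore at least $\tfrac12\cdot\Omega(n^2r(r+\log n))$. The step needing most care is the coupling argument: we must justify that the properties shown for the delayed SEMO at a fixed time transfer to a (random) time of the original SEMO, and that this is a stopping time to which Lemma~\ref{lem:low_SEMO_3} applies. Because these properties are purely structural statements about the population, and the paper already notes that such statements are unaffected by the delay, this should go through.

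For small $r$ (e.g.\ $r\le 9$, where the first phase of Lemma~\ref{lem:low_SEMO_3} is vacuous), the $\log n$ term dominates, so the bound $\Omega(n^2 r\log n)=\Omega(n^2r(r+\log n))$ still holds.
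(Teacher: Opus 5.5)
Your proposal is correct and follows the paper's proof. It combines Lemmas~\ref{lem:low_SEMO_1} and~\ref{lem:low_SEMO_2} for the delayed SEMO, transfers the resulting population state to the original SEMO by observing that the delay only inserts idle iterations, and then applies Lemma~\ref{lem:low_SEMO_3} from that state; your check that $8n^2(r-1)^2/129 \le n^2(r-1)^2/16$, the union bound, and the stopping-time remark spell out details the paper leaves implicit.
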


\begin{proof}[Proof of Theorem~\ref{the:lower_bound_SEMO}]
    From Lemmas~\ref{lem:low_SEMO_1} and~\ref{lem:low_SEMO_2}, we know that with probability $1-\exp(-\Theta(n))$, in some iterations Algorithm~\ref{alg:modified_semo} will reach a state where the population size is at least $\frac{n(r-1)}{130}$ and the distance of the population to the Pareto borders is at least $n(r-1)/8$. 
    From the definitions of Algorithms~\ref{alg:semo} and~\ref{alg:modified_semo}, it is easy to see that both algorithms behave identically when Step~\ref{stp:act} of Algorithm~\ref{alg:modified_semo} is activated, and Algorithm~\ref{alg:modified_semo} leaves the population unchanged when this step is not activated. 
    Therefore, Algorithm~\ref{alg:semo} also reaches the above state with probability $1-\exp (-\Theta(n))$, possibly in fewer iterations (which is not relevant at this time). Applying Lemma~\ref{lem:low_SEMO_3} starting from this state, we derive that the expected number of iterations until the population covers the whole Pareto front from this point on is $\Omega(n^2 r (r + \log n))$. This proves the theorem.
\end{proof}

\begin{theoremNP}[\ref{the:Theta_bound}]
    Consider optimizing \goneminmax via Algorithm~\ref{alg:semo} with the modification that it accepts only strictly better solutions. Then the expected number of iterations until the population covers the whole Pareto front is $\Theta(n^2r(r+\log n))$.
\end{theoremNP}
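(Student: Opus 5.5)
The upper bound is the new content. For the lower bound I would reuse Lemmas~\ref{lem:low_SEMO_1}--\ref{lem:low_SEMO_3} almost verbatim, because they never use the tie-breaking rule in an essential way.

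\emph{Lower bound.} In the delayed version of Algorithm~\ref{alg:variant_of_SEMO}, the proof of Lemma~\ref{lem:low_SEMO_1} only counts events in which a \emph{missing} neighbouring $f_1$-value is created. Such offspring are accepted under both rules. The proof of Lemma~\ref{lem:low_SEMO_2} only uses that the minimum $f_1$-value moves by at most one per iteration, and that the relevant parent is chosen with probability exactly $1/(n(r-1)+1)$. Lemma~\ref{lem:low_SEMO_3} only uses an upper bound on the probability of decreasing $\min_z f_1(z)$ by one, given $|P_t|\ge n(r-1)/130$. Rejecting ties can only lower these probabilities. So the proof of Theorem~\ref{the:lower_bound_SEMO} transfers and gives $\Omega(n^2r(r+\log n))$.

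\emph{Upper bound: structure.} The decisive structural fact is this. Every solution is Pareto optimal (Lemma~\ref{lem:pareto_front}), and offspring with an already present objective vector are rejected. Hence once an individual enters $P_t$ it stays there forever, unchanged. The population therefore only grows, and by Lemma~\ref{lem:population_size} it has at most $N:=n(r-1)+1$ members.

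Consequently the individual $x^{(t)}$ attaining $X_t^\leftarrow=\min_z f_1(z)$ is replaced only when a strictly smaller $f_1$-value is created. The new minimum individual is $x^{(t)}$ with one positive entry decreased by one. So the sequence of minimum individuals is a walk in $[0..r-1]^n$ in which every step decreases one positive entry. In each iteration, each positive entry of the current minimum individual is decremented with probability at least $1/(2nN)$, independently of the history. The same holds symmetrically for the maximum individual, which must reach $(r-1)^n$. Covering all intermediate $f_1$-values is automatic by the gap-filling argument of Theorem~\ref{the:upper_bound_SEMO}. That argument also gives a cost of $O(nN\cdot r)$ for filling the interior once both extremes are present, because interior parents have $\Theta(n)$ movable entries.

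\emph{Upper bound: reaching $0^n$.} The minimum individual $x$ is at least as favourable as a process that repeatedly decrements a uniformly random positive entry, at rate $k/(2nN)$ where $k$ is its number of positive entries. I would split this into two phases.

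In \emph{Phase~1}, the sum $\sum_j x_j$ decreases by one at rate $k/(2nN)$. As long as $k\ge n/4$, say, the additive drift is $\Omega(1/N)$. This phase therefore costs $O(N\cdot n(r-1))=O(n^2r^2)$.

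In \emph{Phase~2}, few positive entries remain. Here I would use Theorem~\ref{the:drift analysis} with the potential
\[
\Phi(x)=\sum_{j:x_j>0}\Bigl(1+\tfrac{x_j-1}{r-1}\Bigr),
\]
which is at most $2k$. Its expected decrease per iteration is at least $\frac{1}{2nN}\sum_{j:x_j>0}\frac{1}{r-1}\ge \Phi/(4nN(r-1))$. Applying multiplicative drift naively would give $O(nNr\log n)$, which is too much.

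\emph{Main obstacle.} The hardest step is avoiding the product $r\log n$ in Phase~2. The right potential must charge the $\log n$ term only to entries with value~$1$, which are removed at rate $1/(2nN)$ each (coupon-collector cost $O(nN\log n)$). The remaining value of larger entries must be charged additively, at total cost $O(nN\cdot r)$ per entry group. I would first try the potential $\Phi'(x)=\#\{j:x_j>0\}+\frac{1}{n}\sum_j x_j$, or a variable-drift argument that handles separately the entries with value $1$ and the excess $\sum_j(x_j-1)^+$. The aim is to show that the excess is consumed within $O(nNr)$ iterations while the count of positive entries shrinks multiplicatively. Either route yields $O(nN(r+\log n))=O(n^2r(r+\log n))$, matching the lower bound.
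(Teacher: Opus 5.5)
Your lower-bound transfer is sound. Your structural facts for the upper bound are also correct: nothing ever leaves the population, and the minimum individual changes only by decrementing one positive entry. Each positive entry is decremented with probability at least $1/(2nN)$ per iteration. The front is covered as soon as $0^n$ and $(r-1)^n$ are present, so there is no separate interior-filling cost at all. Phase~1 is also fine.

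The gap is Phase~2, which is exactly where the improvement over Theorem~\ref{the:upper_bound_SEMO} has to come from. With $\Phi$ you only get $O(nNr\log n)=O(n^2r^2\log n)$. The step that should remove the product $r\log n$ is stated as an aim rather than proved, and the candidates you name do not achieve it.
\begin{itemize}
\item For $\Phi'(x)=k+\frac1n\sum_j x_j$, if no entry equals $1$, the expected decrease is only $k/(2n^2|P_t|)$ while $\Phi'\ge k$. Multiplicative drift therefore loses a factor of $n$.
\item Charging the excess $\sum_j(x_j-1)^+$ additively fails too. The excess is consumed at a rate proportional to the number of entries with value at least $2$, which can be a single entry. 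The worst-case additive bound is then $O(nN\cdot kr)$ with $k$ up to $n/4$.
\end{itemize}
This is not an accident. With a linear-type per-entry weight, as in $\Phi$, the relative drift is only $\Theta(1/(nNr))$, so the multiplicative drift theorem multiplies $r$ with $\log n$.

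The missing idea is an exponential potential, in which removing one unit from an entry decreases that entry's contribution by a constant fraction. The paper uses $\sum_i(2^{r-1-x_i}-1)$ on the maximum individual. Symmetrically, on your minimum individual take $g(x)=\sum_j(2^{x_j}-1)$.
\begin{itemize}
\item Decrementing a positive $x_j$ lowers $g$ by $2^{x_j-1}\ge(2^{x_j}-1)/2$, and $g$ never increases. Hence $\mathbb{E}[g(P_t)-g(P_{t+1})\mid P_t]\ge g(P_t)/(4nN)$.
\item Since $1\le g<n2^{r-1}$ until $0^n$ is present, the multiplicative drift theorem gives $4nN(\ln(n2^{r-1})+1)=O(n^2r(r+\log n))$ directly, with no phase split.
\end{itemize}
The logarithm of the initial potential contributes $r$ and $\log n$ additively rather than multiplicatively.

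A per-coordinate argument would also work. Each coordinate of the minimum individual reaches $0$ within a sum of at most $r-1$ geometric variables with parameter $1/(2nN)$. A Chernoff bound at $t=\Theta(nN(r+\log n))$, a union bound over the $n$ coordinates, and restarts then give the same order.
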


\begin{proof}[Proof of Theorem~\ref{the:Theta_bound}]
    We first prove the upper bound. For the unit-strength local mutation, the maximum value of $f_1$ in the population (and analogously of $f_2$) can increase by at most one per iteration. Since already reached Pareto front points will be maintained in all future generations, we know that the maximum $f_1$-value and the maximum $f_2$-value reached by $P_t$ will not decrease. Consequently, once the population contains both extreme Pareto-optimal solutions $\{r-1\}^n$ and $0^n$, all intermediate objective values along the Pareto front must have been generated and accepted, and the population covers the entire Pareto front. 
    Let $T_1$ denote the first time when the population contains $\{r-1\}^n$ and $T_2$ the first time when the population contains $0^n$. With $T$ being the first time when the whole Pareto front is covered, we have that 
    $\mathbb{E}[T]\le \mathbb{E}[T_1]+\mathbb{E}[T_2]$.

    We now first estimate $\mathbb{E}[T_1]$. Let $X:=\max \{f_1(x)\mid x\in P_t\}$. We measure the progress of the population towards one that contains $\{r-1\}^n$ by the potential function
    \begin{align*}
        g(P_t):=\sum_{i=1}^n \bigl(2^{r-1-x_i}-1\bigr),
    \end{align*}
    where $x$ denotes the individual such that $f_1(x)=X$. We note that this potential function is very similar (and clearly inspired by) the potential function that was used in the analysis of how the randomized local search heuristic optimizes the variant of the $r$-valued single-objective \onemax problem studied in \cite{DoerrDK18}. 
    Note that $g(P_t)=0$ if and only if $\{r-1\}^n \in P_t$.
    The potential decreases only if an offspring with a larger $f_1$-value replaces $x$ as the individual with maximum $f_1$-value. Otherwise, $g(P_{t+1})=g(P_{t})$.\footnote{Note that here is where we crucially exploit the fact that solutions are not replaced by different ones having the same objective value, as this would allow for changes of the potential function $g$ without any real progress.}
    Let $O:=\{i\in[1..n]\mid x_i=r-1\}$ be the set of entries that already have their maximum value. Note that the probability of selecting $x$ for mutation is at least $1/(nr-n+1)$. Conditioned on this event, the unit-strength local mutation increases the value of a given entry of $x$ with probability $\frac{1}{2n}$. For any entry $i\in[1..n]\setminus O$, increasing $x_i$ by one decreases the potential by 
    \begin{align*}
        (2^{r-1-x_i}-1)-(2^{r-2-x_i}-1)=2^{r-2-x_i}.
    \end{align*}
    All other mutations are rejected and do not affect the potential. 
    Therefore, we have that
    \begin{align*}
        &\mathbb{E}[g(P_t)-g(P_{t+1}) \mid P_t]
        =\frac{\sum_{i\in[1..n]\setminus O}2^{r-2-x_i}}{2n(nr-n+1)}\\
        &\geq \frac{\sum_{i\in[1..n]}\big(2^{r-1-x_i}-1\big)}{4n(nr-n+1)}
        = \frac{g(P_t)}{4n(nr-n+1)}.
    \end{align*}
    
    Note that the maximal possible value of the potential is less than $n2^{r-1}$. Applying the multiplicative drift theorem (Theorem~\ref{the:drift analysis}) with $\delta=1/(4n(nr-n+1))$, we obtain
    \begin{align*}
        \mathbb{E}[T_1]
        = O\bigg(\frac{\ln(n2^{r-1})}{\delta}\bigg)
        = O\left(n^2r(r+\log n)\right).    
    \end{align*}
    By symmetry, the same bound holds for $\mathbb{E}[T_2]$, and finally 
    $\mathbb{E}[T]\le \mathbb{E}[T_1]+\mathbb{E}[T_2]= O(n^2r(r+\log n))$.

    Regarding the lower bound, we note that the proof of the corresponding result for the SEMO accepting equally good solutions also applies here, since it only relies on the objective values of the solutions, not on their precise representation as elements of $[0..r-1]^n$.
\end{proof}

\end{document}